\newtheorem{lemma}{Lemma}
\newtheorem{theorem}{Theorem}
\newtheorem{corollary}{Corollary}
\newtheorem{definition}{Definition}
\newtheorem{remark}{Remark}
\newtheorem{example}{Example}
\DeclareMathOperator{\rank}{rank} 
\DeclareMathOperator{\spann}{span} 
\DeclareMathOperator{\dimn}{dim}  
\DeclareMathOperator{\sparkn}{spark}  
\newcommand{\jc}{\color{black}}
\newcommand{\fin}{\color{black}}
\newcommand{\vb}{\mathbf{b}}
\newcommand{\matr}[1]{\mathbf{#1}}
\newcommand{\argmin}{\mathop{\operator@font argmin}}
\definecolor{gris}{gray}{0.90}
\definecolor{gris25}{gray}{0.90}
\definecolor{americanrose}{rgb}{1.0, 0.01, 0.24}
\definecolor{bostonuniversityred}{rgb}{0.8, 0.0, 0.0}
\definecolor{shamrockgreen}{rgb}{0.0, 0.62, 0.38}
\definecolor{selectiveyellow}{rgb}{1.0, 0.73, 0.0}
\definecolor{royalblue}{rgb}{0.25, 0.41, 0.88}
\definecolor{ashgrey}{rgb}{0.7, 0.75, 0.71}
\definecolor{burgundy}{RGB}{159,29,53}
\definecolor{darkgreen}{RGB}{18,53,26}
\definecolor{lightblue}{RGB}{102,217,255}
\definecolor{fakeorange}{RGB}{255,140,102}
\definecolor{arylideyellow}{rgb}{0.91, 0.84, 0.42}
\definecolor{bananayellow}{rgb}{1.0, 0.88, 0.21}
\definecolor{gris_f}{gray}{0.35}
\definecolor{bordure}{rgb}{0.09,0.17,0.68}
\definecolor{aquamarine}{rgb}{0.5, 1.0, 0.83}
\definecolor{apricot}{rgb}{0.98, 0.81, 0.69}
\definecolor{babyblue}{rgb}{0.54, 0.81, 0.94}
\definecolor{uipoppy}{RGB}{225, 64, 5}
\definecolor{uipaleblue}{RGB}{96,123,139}
\definecolor{uiblack}{RGB}{0, 0, 0}
\definecolor{decoda}{RGB}{0,153, 0}
\definecolor{lightgreen}{rgb}{0.56, 0.93, 0.56}
\definecolor{blue_f}{rgb}{0.2, 0.2, 0.6}
\definecolor{cinnamon}{rgb}{0.82, 0.41, 0.12}
\definecolor{darkpastelgreen}{rgb}{0.2, 0.75, 0.24}
\definecolor{drab}{rgb}{0.59, 0.44, 0.09}
\title{
  Identifiability of Complete Dictionary Learning\thanks{The authors acknowledge the support by the Fonds de la recherche scientifique-FNRS (incentive grant for scientific research no F.4501.16). NG also acknowledges the support by the European Research Council (ERC starting grant no 679515), and by the Fonds de la Recherche Scientifique - FNRS and the Fonds Wetenschappelijk Onderzoek - Vlanderen (FWO) under EOS Project no O005318F-RG47.}}
  \date{}
\author{
Jeremy E. Cohen\thanks{CNRS, Universit\'e de Rennes, Inria, IRISA Campus de Beaulieu, 35042 Rennes, France.} 
\and 
Nicolas Gillis\thanks{Corresponding author. Email: nicolas.gillis@umons.ac.be. 
Department of Mathematics and Operational Research, Faculté polytechnique, Université de Mons, Rue de Houdain 9, 7000 Mons, Belgium.}
}
\begin{document}
\maketitle

\begin{abstract}
  Sparse component analysis (SCA), \jc also known as \fin complete dictionary learning, is the following problem: Given
        an input matrix $M$ and an integer $r$, find a dictionary $D$ with $r$
        columns and a matrix $B$ with $k$-sparse columns (that is, each column
        of $B$ has at most $k$ non-zero entries) such that $M \approx DB$. A
        key issue in SCA is identifiability, that is, characterizing the
        conditions under which $D$ and $B$ are essentially unique (that is,
        they are unique up to permutation and scaling of the columns of $D$ and
        rows of $B$).  Although SCA has been vastly investigated in the last
        two decades, only a few works have tackled this issue in the
        deterministic scenario, and no work provides reasonable bounds in the
        minimum number of  samples (that is, columns of $M$)  that leads to
        identifiability. In this work, we provide new results in the
        deterministic scenario when the data has a low-rank structure, that is,
        when $D$ \jc is (under)complete. \fin While previous bounds feature a combinatorial
        term $r \choose k$, we \jc exhibit a sufficient condition
        \fin involving $\mathcal{O}(r^3/(r-k)^2)$ samples \jc that yields
        \fin an essentially unique decomposition, as long as
        these data points are well spread among the subspaces spanned by $r-1$
        columns of $D$. \jc We also exhibit a necessary lower bound on the number of
        samples that contradicts previous results in the literature when $k$
        equals $r-1$. \fin Our bounds provide a drastic improvement compared to the state of the art, and imply for example
        that for a fixed proportion of zeros (constant and independent of $r$,
        e.g., 10\% of zero entries in $B$), one only requires $\mathcal{O}(r)$
        data points to guarantee identifiability. 
\end{abstract}

\textbf{Keywords.}  matrix factorization, dictionary learning, sparse component analysis, identifiability, uniqueness.

\section{Introduction}


In the last two decades, 
dictionary learning has had tremendous success in various fields such
as image processing, neuroimaging and remote sensing; 
see, e.g.,~\cite{elad2006image, mairal2009online, mairal2009supervised, rubinstein2010dictionaries, tosic2011dictionary, mairal2012task} 
and the references therein. 
In fact, many applications in source separation involve data expressed as a
combination of a few atoms from an appropriate but unknown basis. 
After the pioneer work of Olshausen and Field~\cite{Olshausen1997}, efforts
were put to derive conditions under which a `true' underlying dictionary and sparse
coefficients could be recovered with
certainty~\cite{Georgiev2005,aharon2006uniqueness} or
almost-surely~\cite{spielman2012exact, Gribonval2015}. Algorithmic
aspects of dictionary learning have also been extensively
studied~\cite{aharon2006k,mairal2009online}, sometimes under the name sparse
component analysis (SCA) in the signal processing community~\cite{naini2008estimating, gribonval2010sparse}.

The focus of this work is on the \emph{identifiability} of SCA in the case of an undercomplete dictionary (that is, the number of atoms is smaller than the ambient dimension) 
in a deterministic scenario and without noise.  By deterministic, we mean that we will derive conditions under which the decomposition is always essentially unique. 
We will refer to this model as low-rank SCA (LRSCA). 
To the best of our knowledge, the identifiability of LRSCA has been treated in
only two early works~\cite{Georgiev2005,aharon2006uniqueness}.  There exist other works dealing with the identifiability of dictionary
learning such as~\cite{Gribonval2015,hillar2015When}, but these do not improve the previously mentioned bounds in the low-rank
case.  
The main contribution of this paper is to provide new strong identifiability results for LRSCA. 

In Section~\ref{subsec:related} we formally introduce LRSCA and recall previous results for this problem. We also show some examples that give a geometric intuition for LRSCA. 
 In Section~\ref{subsec:related}, we show how LRSCA relates to other low-rank matrix
factorization models, and describe some  applications of both LRSCA and the proposed identifiability results. 
In Section~\ref{sec:mainres}, we prove our main results, namely Theorems~\ref{theorem:1} and~\ref{theorem:seq}. 
        Both provide sufficient conditions based on lower bounds on the number of data points
        required to guarantee identifiability. 
       Both theorems asymptotically lead to the 
       same bound and require $\mathcal{O}(r^3/\ell^2)$ data points well spread
       among the subspaces spanned by $r-1$ atoms of the dictionary, where $r$
       is the rank of the input matrix and \jc $k=r-\ell$ \fin is the maximum
       number of atoms used by each data point. \jc The integer $\ell$ is commonly
       referred to as the co-sparsity level. \fin
       Moreover, we prove that this bound is tight in the following two cases: 
       when $\ell$ is constant in which case $\mathcal{O}({r^3})$ points are sufficient to guarantee identifiability, 
       and when $\ell$ is a fixed proportion of $r$ in which case $\mathcal{O}({r})$ points are sufficient.   
        Theorem~\ref{theorem:1} is weaker than Theorem~\ref{theorem:seq}, but
        its proof is easier to derive and the bound it provides on the
        number of points can be easily computed. 
        Theorem~\ref{theorem:seq} is based on a sequential construction
        of subspaces containing the data points, and therefore requires
        construction-dependent hypotheses harder to verify. 
        We conclude the paper by discussing some directions of further research; in particular generalizing our results \jc in the presence of noise,  \fin 
        for overcomplete dictionaries, and for nonnegative coefficients.

\section*{Notations}

Vectors are denoted as small letters $x$, matrices as capital letters $M$. 
The $i$th column of matrix $B$ is denoted $b_i$, and the $j$th entry in that column is denoted $b_{i,j}$. 
The quantity 
$\| x \|_{0}$ is the so-called $\ell_0$ norm of the vector $x$ defined as the
number of non-zero entries in $x$. 
The spark of a matrix is the smallest number $p$ such that there exists a set
of $p$ columns which are linearly dependent. 
By extension, if a matrix $M \in \mathbb{R}^{p \times n}$ has rank $n$, we
define $\sparkn(M) = n+1$. Note that if $\sparkn(M)=r+1$, then
$\rank(M)\geq r$. 
Given a finite set $\mathcal{S}$, we denote its cardinality by $|\mathcal{S}|$. 
Given a set of vectors $\mathcal{X}=\{x_1,\dots, x_n\}$ or a matrix $X = [ x_1,\dots, x_n ]$, 
$\spann(X) = \spann(\mathcal{X})$ is the linear subspace spanned by $\mathcal{X}$.

\section{Formalism, previous results and geometric intuition} \label{sec:relatedworks}

Let $M$ be a real $p \times n$ matrix. The working assumption of
dictionary learning is that there exist a real matrix $D$ in
$\mathds{R}^{p \times r}$ and a sparse coefficient matrix $B$ in
$\mathds{R}^{r\times n}$ such that $M=DB$. 
In this paper, we impose a strict sparsity constraint on the coefficient matrix $B$, 
namely $\|b_i \|_{0} \leq k$ for all
$i\leq n$ for some $k < r$. This requires that each column of $B$ has at least $\ell=r-k$ zero entries. 
Furthermore, we assume that matrix $M$ admits a low-rank dictionary-based
representation, so that $r$ is the rank of $M$ and $r \leq p$. 
This leads to the following model which we will refer to as low-rank sparse component analysis (LRSCA): 
\begin{equation}
   \textbf{LRSCA: } 
   \left\{ \begin{array}{l}
    M = DB, \\ M \in \mathbb{R}^{p \times n}, 
    D \in \mathbb{R}^{p \times r}, B \in \mathbb{R}^{r \times n},  \\
   \| b_i \|_{0} \leq k < r \leq p \text{ for all } i,
   \\ \rank(M) = \rank(D) = r. 
    \label{eq:model}
    \end{array} \right. 
\end{equation} 
LRSCA~\eqref{eq:model} does not take the noise into account. 
Note that without loss of generality (w.l.o.g.), a dimensionality 
reduction step may be performed on $M$ that leads to a square/complete
dictionary learning problem with $p=r$ (this requires to premultiply $M$ by a $r$-by-$p$ matrix which does not destroy the sparsity structure of $B$). 
In other words, studying undercomplete dictionary learning in the absence of noise boils down to studying square dictionary learning. 
Hence one may assume w.l.o.g.\ that $p=r$ when analyzing~\eqref{eq:model}.

 \subsection{ LRSCA: related models and applications}%
    \label{subsec:related}

    LRSCA is a sparse low-rank matrix factorization model~\cite{shen2008sparse}, and it is also closely
 related to sparse PCA~\cite{d2005direct, zou2006sparse,
 journee2010generalized}. 
 In the literature, to obtain sparse decompositions, the
 most widely used approach is to add an $\ell_1$ norm penalty term in the objective function~\cite{zou2006sparse}, which has also been used for tensor factorization models~\cite{Caiafa2013,Lim2010}. 
 As far as we know, most of these works do not discuss the identifiability
 of the LRSCA model. Note that for tensors low-rank
 factorization models such as PARAFAC~\cite{Kolda2009}, identifiability
 is satisfied under mild
 conditions without enforcing sparsity constraint; see~\cite{Domanov2013} and the references therein. 
 
 LRSCA is also related to other constrained matrix factorization models; in particular nonnegative matrix factorization (NMF)~\cite{lee1999learning}. NMF imposes that both factors $D$ and $B$ are component-wise nonnegative which, in most cases, leads to sparse decompositions. Nonnegativity allows to interpret the factors (see below for some examples) 
 which is only meaningful if the NMF solution is essentially unique. Hence  identifiability conditions for NMF has
 been an active field of research; see the recent survey~\cite{fu2018Identifiability} and the references therein. However, as far as we know, there is no identifiability result specific for sparse NMF. 
 Our results (Theorems~\ref{theorem:1} and~\ref{theorem:seq}) would apply to sparse NMF and therefore LRSCA could be used for the following applications: 
 \begin{itemize} 
         \item Spectral unmixing: given an hyperspectral image $M$ containing pixels row-wise and reflectance spectra column-wise,
                 the dictionary $D$ contains the  
                 spectral signatures of constitutive materials (called endmembers), and coefficients matrix $B$ contains
                  the abundance of each endmember in each pixel.
                 Since only a few endmembers are present in each pixel (typically at most 5), $B$ is sparse~\cite{Bioucas-Dias2012}.
                 
         \item Document classification: given a term-by-document matrix $M$, the dictionary $D$ is a collection of topics while the coefficient matrix $B$ indicates which document discusses which topic. Since most documents do not discuss most topics, $B$ is sparse~\cite{lee1999learning}. Note that, in this case, $D$ is also sparse as most topics use only a small proportion of all the words.

         \item Audio source separation: 
         given a spectrogram $M$ of a recording of several audio sources (rows correspond to frequency and columns to time), the dictionary $D$ contains 
                 the spectra of each source and the coefficients matrix $B$
                 contains the temporal activation of each source. If \jc audio
                 sources (speakers, instruments) are not active
                 at all time in the recordings then $B$ must be
                 sparse~\cite{ozerov2010Multichannel}. \fin
 \end{itemize} 

LRSCA is also closely related to subspace clustering~\cite{vidal2011subspace}.
In fact, as already noted in the literature, the exact sparsity constraint on
the columns of $B$ is equivalent to imposing that the columns of the data
matrix $M$ belong to the union of subspaces generated by subsets of the columns
of $D$ with cardinality smaller than $k$.  Therefore, given $M$, finding a
decomposition $(D,B)$ as in~\eqref{eq:model} may be written as a subspace
clustering problem, and many algorithm solutions have been derived using this
observation~\cite{Georgiev2005,naini2008estimating,gribonval2010sparse}; see
also~\cite{elhamifar2013sparse,tsakiris2017hyperplane} for recent results and
algorithms for subspace clustering.  However, as far as we know, 
these works do not discuss the identifiability of LRSCA.

\subsection{Identifiability} \label{sec:theory}



In this paper, we focus on the following question: 
Given a matrix $M\in\mathds{R}^{p \times n}$ satisfying the LRSCA model~\eqref{eq:model}, 
is the decomposition $(D,B)$ essentially unique?  
A decomposition $(D,B)$ is said to be \emph{essentially unique} if 
for any other decomposition $(D',B')$ satisfying~\eqref{eq:model} 
we have $D=D'\Pi\matr{\Sigma}$ and $B=\matr{\Sigma}^{-1}\Pi^T B'$ for a diagonal scaling matrix
$\matr{\Sigma}$ and a permutation matrix $\Pi$. 
In the remainder of this paper, we will say that two decompositions are distinct if they cannot be obtained by permutation and scaling of one another.

\subsubsection{State-of-the-art results} \label{stateart}

Surprisingly, to the best of our knowledge, most works focusing on the identifiability of dictionary learning have not tackled directly the undercomplete case, 
although numerous algorithms have been proposed for this problem and variants; see Section~\ref{subsec:related}.  

Most recent works on dictionary learning have tackled the identifiability question 
using probabilistic approaches under various a priori distributions for the locus  and values of the non-zero entries of $B$; see~\cite{Gribonval2015} for a summary of such results.  
\jc Algorithmic recovery results are
also available and usually require strong assumptions on the entries of $B$.
For example, \cite{spielman2012exact,blasiok2016improved,adamczak2016note} show that when $k=\mathcal{O}(\sqrt{r})$, roughly
$r\log(r)$ samples are sufficient for recovery, and propose an efficient dictionary learning algorithm, referred to as Exact Recovery of Sparsely-Used Dictionaries (ER-SpUD). 
Other examples include 
a Riemannian trust-region method proposed in~\cite{sun2017complete, sun2017complete2} which assumes $\mathcal{O}(r)$ zeros per column of $B$, 
a model based on tensor Tucker decompositions~\cite{Anandkumar2015} which requires structured sparsity, and an algorithm similar to basis pursuit 
which requires separability and nonnegativity~\cite{Arora2013}. 
\fin 
These results do not consider degeneracies that could occur when studying unions of subspaces (because degeneracies happen with probability zero). This makes the study of the deterministic case rather different. 


To the best of our knowledge, only \jc three \fin
works~\cite{aharon2006uniqueness,Georgiev2005,hillar2015When} have studied specifically the
identifiability of both $D$ and $B$ under sparsity constraints in the absence
of noise and without any a priori distribution on the entries of $B$, that is, 
no assumption is made on $B$ beyond sparsity.  
The first result is due to Aharon \textit{et al.}~\cite{aharon2006uniqueness}, which states that if
\begin{itemize}
    \item $k < \frac{\sparkn(D)}{2}$, 
    \item every $k$-dimensional subspace spanned by $k$ columns of $D$ contains at
        least $k+1$ columns of $M$ whose spark is $k+1$. 
    \item no $k$-dimensional subspace contains $k+1$ columns of $M$ except
        those generated by $D$ (non-degeneracy), 
\end{itemize}
then the decomposition $(D,B)$ is essentially unique. 
This result is however somewhat not satisfying since the number
of points required in total is extremely large, proportional to ${(k+1)}{r\choose k}$,
and the non-degeneracy condition, although not restrictive in practice,
basically means that uniqueness is assumed from the start.  

\jc The second 
result by
Hillar and Sommar~\cite[Theorem 1]{hillar2015When} does not improve on the above bound in the low-rank and deterministic setting. It states that if 
\begin{itemize}
        \item $k < \frac{\sparkn(D)}{2}$,
        \item every $k$-dimensional subspace spanned by $k$ columns of $D$
                contains at least $k{r\choose k}$ columns of $M$ in a general position,
\end{itemize}
then the decomposition $(D,B)$ is essentially unique. Note that the notion of general position in the results of Hillar and Sommar
implies the conditions of Aharon \textit{et. al.} 
Again, the number of samples required is large, namely $k{r\choose k}^2$ since there are ${r\choose k}$ subspaces spanned by $k$ columns of $D$. \fin


A \jc third \fin seemingly more powerful result is due to Georgiev \textit{et al.}~\cite{Georgiev2005}. 
Among the proposed results on identifiability in their contribution, the most commonly used one is the following. If
\begin{itemize}
    \item $D$ is full column rank, 
    \item $\| b_i \|_0 = r-1$,
    \item every subspace spanned by $r-1$ columns of $D$ contains at
        least $r$ columns of $M$ whose spark is equal to $r$, 
\end{itemize}
then the decomposition $(D,B)$ is essentially unique. 
We show in the next
subsection that this result is incorrect. 



\subsubsection{Examples and lower bounds on the number of columns of $M$} 

Before exposing our identifiability results, let us provide the
reader with some geometric intuition by presenting a few simple examples.
To allow 2-dimensional representations of 3-dimensional problems ($r=3$), 
all drawings are done in a projective space, where the vectors were
normalized to have their entries summing to 1 (hence, in 3 dimensions, hyperplanes are represented by lines).  

    \begin{example}[9 points lying on 3 hyperplanes in dimension 3]  \label{examp:2} 
 In Figure~\ref{fig:e2}, we provide an
    example of a matrix $M$ which admits exactly two decompositions of the form 
    \[
    M = [d_1,d_2,d_3]
    \left[\begin{array}{ccccccccc}
    \ast & \ast & \ast & \ast & \ast & \ast & 0 & 0 & 0 \\
    \ast & \ast & \ast & 0 & 0 & 0 & \ast & \ast & \ast \\
    0 & 0 & 0 & \ast & \ast & \ast & \ast & \ast & \ast 
\end{array}\right], 
\] 
where each hyperplane generated by two columns of $D$ contains exactly three points. 
In other words, the matrix $M$ admits two LRSCA decompositions~\eqref{eq:model}
with $p=r=3$, $k=2$, $\ell=1$ and $n=9$.  
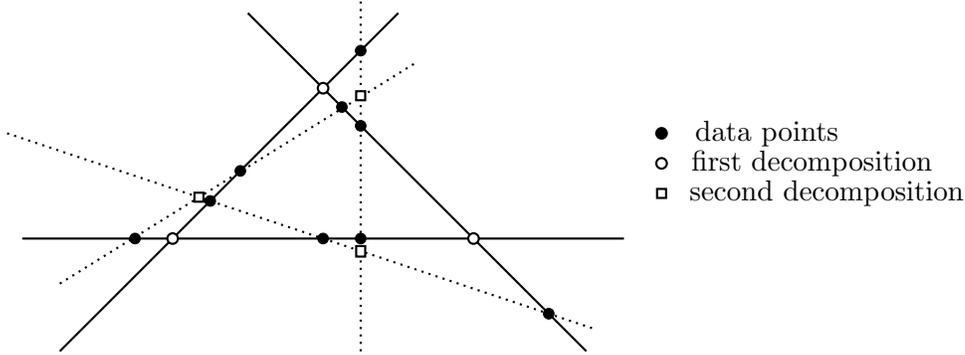
\begin{figure}[ht!]
    \centering
    \begin{tikzpicture}[scale=1]

    \draw[thick] (-1.5,-1.5) -- (3,3);
    \draw[thick] (1,3) -- (5.5,-1.5);
    \draw[thick] (-2,0) -- (6,0);
    \draw[thick,fill=white] (0,0) circle (2pt);
    \draw[thick,fill=white] (2,2) circle (2pt);
    \draw[thick,fill=white] (4,0) circle (2pt);

    \draw[fill=black] (0.5,0.5) circle (2pt);
    \draw[fill=black] (0.90,0.90) circle (2pt);
    \draw[fill=black] (2.5,1.5) circle (2pt);
    \draw[fill=black] (2.25,1.75) circle (2pt);
    \draw[fill=black] (2,0) circle (2pt);
    \draw[fill=black] (2.5,0) circle (2pt);
    \draw[fill=black] (-0.5,0) circle (2pt);
    \draw[fill=black] (2.5,2.5) circle (2pt);
    \draw[fill=black] (5,-1) circle (2pt);

    \draw[dotted,thick] (2.5,-1.5) -- (2.5,3.2);
    \draw[dotted,thick] (-1.5,-0.6) -- (3.25,2.35);
    \draw[dotted,thick] (-2.2,1.4) -- (5.6,-1.2);

    \draw[thick,fill=white] (2.44,1.84) rectangle (2.56,1.96);
    \draw[thick,fill=white] (2.44,-0.24) rectangle (2.56,-0.10);
    \draw[thick,fill=white] (0.29,0.49) rectangle (0.41,0.61);


    \draw[thick,fill=black] (6.5,1.4) circle (2pt);
    \node at (7.9,1.4) {data points};
    \draw[thick,fill=white] (6.5,1) circle (2pt);
    \node at (8.5,1) {first decomposition};
    \draw[thick,fill=white] (6.44,0.54) rectangle (6.56,0.66);
    \node at (8.70,0.6) {second decomposition};

\end{tikzpicture}
    \caption{A scenario where SCA is not unique, although it would be unique
    generically (that is, if the points were generated randomly on the hyperplanes 
    generated by the combinations of any two columns of $D$).}
    \label{fig:e2}
\end{figure} 
This provides a counter-example in the case $r=3$ to the result
from~\cite{Georgiev2005} (see Section~\ref{stateart}). 
In fact, 
 the matrix $M$ contains 9 data points with two distinct decompositions, 
although $M$ satisfies the conditions in~\cite{Georgiev2005}, 
namely, $D$ has full column rank and each hyperplane contains $3$ columns of $M$ whose spark is $3$. 
This example will generically not happen, since observing three aligned points generated randomly on three two-dimensional subspaces has probability zero. 
Hence, most low-rank SCA models with three points on each hyperplane in the case $k=2=r-1$ do not suffer from identifiability issues. 
\end{example}

Inspired by Example~\ref{examp:2}, 
for any $r$ and $k=r-1$, it is possible to construct a matrix $M$ 
with two distinct LRSCA decompositions that has $n = r^3 - 2r^2$ columns, 
with $r^2 - 2r$ columns with spark $r$ on each subspace spanned by $r-1$ columns of $D$; 
see\footnote{Algorithm~\ref{alg:ce} is available online from \url{https://sites.google.com/site/nicolasgillis/}.} Algorithm~\ref{alg:ce}. 
Lemma~\ref{lemma:alg1} proves that the construction of Algorithm~\ref{alg:ce} will generate such examples with probability one. For example, for $r=4$ and $k=3$, Algorithms~\ref{alg:ce} generates a matrix $M$ with two distinct LRSCA decompositions with $n = 32$, with 8 data points whose spark is 4 on each subspace spanned by 3 columns of $D$ . 


    \begin{algorithm}
        \caption{Generating matrix $M$ with distinct LRSCA decompositions~\eqref{eq:model} 
        with $r^3 - 2r^2$ columns in the case $k=r-1$.}  
         \label{alg:ce}
\begin{algorithmic}

    \STATE \textbf{INPUT: } An integer $r$. 
\STATE{\textbf{OUTPUT: } 
Data matrix $M \in \mathbb{R}^{r \times n}$ with $n = r^3 - 2r^2$ columns, and two decompositions 
$(D^{(1)},B^{(1)})$ 
and
$(D^{(2)},B^{(2)})$ 
for $M = D^{(1)} B^{(1)} = D^{(2)}B^{(2)}$ satisfying~\eqref{eq:model} and with $r^2 - 2r$ data points with spark $r$ on each subspace spanned by $r-1$ columns of $D^{(i)}$ ($i=1,2$).}     
    
    \STATE{ 
      1/  Generate at random two full-rank matrices $D^{(1)}$ and $D^{(2)}$ in $\mathbb{R}^{r \times r}$. 
      For example, use i.i.d.\ Gaussian distribution for the entries of
      $D^{(1)}$ and $D^{(2)}$. 
        \\}
    \STATE{
    2/ For $t=1,2$, define the $r$ hyperplanes $\mathds{F}_j^{(t)} = \spann(\{d^{(t)}_i\}_{i\neq j})$ for $j=1,2,\dots,r$. 
    \\    }
    \STATE{
    3/ Generate at random $r-2$ points on each intersection
        $\mathds{F}_j^{(1)} \bigcap \mathds{F}^{(2)}_l$ for $j,l \in [1,r]$ (there are $r^2$ such intersections), for a total of $r^2(r-2)$ points, e.g., compute a basis of the intersection and then use i.i.d.\ Gaussian distribution for the weights of the linear combinations of the $r-2$ points.   
Let the columns of $M$ consist of these $r^2(r-2)$ points.          
            \\}
    \STATE{
    4/ For $t=1,2$,  compute the coefficient matrices $B^{(t)}$ such that $M = D^{(t)}B^{(t)}$. 
    }
\end{algorithmic}
\end{algorithm}

\begin{lemma} \label{lemma:alg1}
When using the Gaussian distribution, Algorithm~\ref{alg:ce} generates with probability one a matrix $M \in \mathbb{R}^{r \times n}$ with $n=r^3 - 2r^2$ columns, where each subspace spanned by $r-1$ columns of $D$ contains $r^2-2r$ columns of $M$ that have spark $r$.  
\end{lemma}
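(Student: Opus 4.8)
The plan is to show that Algorithm~\ref{alg:ce} fails to produce a matrix with the stated spark property only when the Gaussian parameters — the entries of $D^{(1)}$ and $D^{(2)}$ and the weights drawn in Step~3 — fall into a set of Lebesgue measure zero, so that failure occurs with probability zero. By the symmetry between $D^{(1)}$ and $D^{(2)}$ it suffices to argue for $D=D^{(1)}$. With probability one $D^{(1)},D^{(2)}$ are invertible, so $\mathds{F}_j^{(1)}=\spann(\{d_i^{(1)}\}_{i\ne j})$ is the hyperplane whose normal is the $j$th row of $(D^{(1)})^{-1}$, and similarly for the $\mathds{F}_l^{(2)}$. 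I would first isolate a probability-one ``general position'' event $E_0$ on which all these normal vectors are in general linear position; in particular, for each $j$ the intersections $V_{j,l}:=\mathds{F}_j^{(1)}\cap\mathds{F}_l^{(2)}$, $l=1,\dots,r$, are $(r-2)$-dimensional, while each triple intersection $\mathds{F}_j^{(1)}\cap\mathds{F}_l^{(2)}\cap\mathds{F}_{l'}^{(2)}$ ($l\ne l'$) and $\mathds{F}_j^{(1)}\cap\mathds{F}_{j'}^{(1)}\cap\mathds{F}_l^{(2)}$ ($j\ne j'$) has dimension only $r-3$. Conditionally on $E_0$, the $n=r^2(r-2)=r^3-2r^2$ columns of $M$ are mutually independent, and each of the $r-2$ columns attached to $V_{j,l}$ has an absolutely continuous law on the $(r-2)$-dimensional space $V_{j,l}$, being a Gaussian combination of a basis of it. Since a point with a density on $V_{j,l}$ almost surely avoids any fixed proper subspace of $V_{j,l}$, the two triple-intersection bounds imply that, with probability one, every column attached to $V_{j,l}$ lies in $\mathds{F}_j^{(1)}$ and $\mathds{F}_l^{(2)}$ and in no other $\mathds{F}_{j'}^{(1)}$ or $\mathds{F}_{l'}^{(2)}$; hence each $\mathds{F}_j^{(1)}$ contains exactly the $r(r-2)=r^2-2r$ columns attached to $V_{j,1},\dots,V_{j,r}$.

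It then remains to show that, for each $j$, these $r^2-2r$ columns of $M$ have spark $r$. They lie in the $(r-1)$-dimensional space $\mathds{F}_j^{(1)}$ and, as $r^2-2r\ge r$ for $r\ge 3$, their spark is automatically at most $r$; so the point is to prove that \emph{every} $(r-1)$-subset of them is linearly independent, and since there are finitely many such subsets it suffices to fix one, say $S$, and bound its failure probability. Because each $V_{j,l}$ supplies only $r-2<r-1$ of the points of $S$, the subset $S$ draws points from at least two distinct intersections; I can therefore order $S=\{x_1,\dots,x_{r-1}\}$ so that $x_{r-1}$ lies in some $V^*:=V_{j,l^*}$ while $S\setminus\{x_{r-1}\}$ contains a point $y$ lying in a different intersection $V':=V_{j,l'}$, $l'\ne l^*$. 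Writing $W_i:=\spann(x_1,\dots,x_i)$ and letting $V_i$ be the intersection containing $x_i$, linear dependence of $S$ forces $x_i\in W_{i-1}$ for some $i$, so it is enough to bound each event $\{x_i\in W_{i-1}\}$. For $i\le r-2$ one has $\dim W_{i-1}\le i-1\le r-3<r-2=\dim V_i$ on $E_0$, hence $V_i\not\subseteq W_{i-1}$, and conditioning on $x_1,\dots,x_{i-1}$ the density of $x_i$ on $V_i$ gives probability zero. For $i=r-1$ I would split $\{x_{r-1}\in W_{r-2}\}$ into $\{W_{r-2}=V^*\}$ and $\{x_{r-1}\in W_{r-2},\ W_{r-2}\ne V^*\}$: on the second event $W_{r-2}\cap V^*$ is a proper subspace of $V^*$ (since $\dim W_{r-2}\le r-2=\dim V^*$), so the density of $x_{r-1}$ again gives zero; on the first event $W_{r-2}\ni y$ forces $y\in V^*$, i.e. $y\in V'\cap V^*$, which by the triple-intersection bound is a proper subspace of $V'$, so the density of $y$ gives zero. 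Summing over $i$ shows $S$ is linearly dependent with probability zero, and a union bound over the finitely many pairs $(j,S)$ — and over the two choices $D^{(1)},D^{(2)}$ — completes the proof.

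The one genuinely non-trivial point, the crux of the argument, is the ``last point'' case $i=r-1$: there $W_{r-2}$ and $V^*$ have equal dimension $r-2$, so $x_{r-1}\notin W_{r-2}$ is not forced by dimension counting and one must rule out the coincidence $W_{r-2}=V^*$. This is exactly where the randomness of $D^{(1)},D^{(2)}$ is used, via the fact that two distinct intersections $V_{j,l^*}$ and $V_{j,l'}$ sitting inside the same hyperplane $\mathds{F}_j^{(1)}$ meet in dimension only $r-3$; verifying that this (and the analogous triple-intersection bounds defining $E_0$) holds on a probability-one set — either from the general position of the rows of $(D^{(1)})^{-1}$ and $(D^{(2)})^{-1}$, or by exhibiting one explicit non-degenerate instance such as $D^{(1)}=I$ with a generic $D^{(2)}$ — is the technical heart of the lemma. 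Equivalently, one could phrase the whole failure locus as the common zero set of finitely many polynomials in the Gaussian parameters (suitable maximal minors of the column-submatrices of $M$, with denominators from the bases of the $V_{j,l}$ cleared) and show it is a proper subvariety by producing a single non-degenerate example; the density argument sketched above is just a streamlined version of this.
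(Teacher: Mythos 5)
Your proposal is correct and follows essentially the same route as the paper: argue that, under the Gaussian draws, the dictionaries and the $r^2$ intersections $\mathds{F}_j^{(1)}\cap\mathds{F}_l^{(2)}$ are in general position with probability one, and then use absolute continuity of the points on each $(r-2)$-dimensional intersection to rule out any degenerate $(r-1)$-subset. The only difference is one of detail: where the paper dismisses the spark-$r$ claim in a single sentence (``only subsets of $r-2$ points were generated on the same $(r-2)$-dimensional subspace''), you make it rigorous via the sequential conditioning argument and the $r-3$-dimensional triple-intersection bound, which is a faithful elaboration of the paper's argument rather than a different approach.
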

\begin{proof}
Under the i.i.d.\ Gaussian distribution, with probability one, $D^{(1)}$ and $D^{(2)}$ are full rank and the $2r$ subspaces $\mathds{F}_j^{(t)} = \spann(\{d^{(t)}_i\}_{i\neq j})$ for $j=1,2,\dots,r$ and $t=1,2$ do not coincide.  
If the $r-2$ points on each intersection
        $\mathds{F}_j^{(1)} \bigcap \mathds{F}^{(2)}_l$ for $j,l \in [1,r]$ are generated as follows:  
        (i)~compute a basis of the intersection, and 
        (ii)~use i.i.d.\ Gaussian distribution for the weights of the linear combination in that subspace,  
        then these points have spark $r$, with probability one. 
        First, note that these intersections exist and 
        have dimension $r-2$, so that the $r-2$ points have spark $r-1$ with probability one.   
    Note that the intersections $\mathds{F}_j^{(1)} \bigcap \mathds{F}^{(2)}_l$ of dimension $r-2$ define $r^2$ subspaces that do not coincide, with probability one. 
    Note also that each subspace $\mathds{F}_j^{(k)}$ contains exactly $r(r-2)$ points whose spark cannot be larger than $r$ since it has dimension $r-1$.    
With probability one, the spark is exactly $r$: 
$r-1$ columns cannot be linearly dependant (hence have rank $r-2$) since only subsets of $(r-2)$ points were generated on the same $(r-2)$-dimensional subspace.

Finally, by construction, $B^{(t)}$ ($t=1,2$) exists and each column is at \jc
worse $k-1$ sparse \jc since every column of $M$ belongs to
$\mathds{F}_j^{(t)}$ for some $j$. In fact, it is exactly \jc $k-1$ sparse \fin with probability one since the points were picked at random in the intersections hence have non-zero coefficients in the corresponding columns of $D^{(t)}$.  
\end{proof}


Lemma~\ref{lemma:alg1} implies that, in the deterministic case and for $k=r-1$, at least $\mathcal{O}(r^3)$ points are necessary 
to guarantee essential uniqueness of LRSCA decompositions. 
Interestingly, we will prove in Theorem~\ref{theorem:seq} that adding a single point to any subspace spanned by $r-1$ columns of $D$ will make the decomposition essentially unique. 
The next example illustrates this fact in dimension 3, with a hyperplane containing 4 points.

 \begin{example}[$4+3+2$ points lying on 3 hyperplanes in dimension 3] \label{examp:3}
 Figure~\ref{fig:e3} shows an example in dimension 3 
 with 4 points on a single hyperplane (the four aligned points). It turns out that there cannot be 3 hyperplanes
covering these 4 points that do not contain the span of these 4 points.
Hence, the hyperplane containing these 4 points must be identified (on the figure, the line containing the 4 aligned points must be identified), meaning that the span of these four points must coincide with the span of two columns of $D$ in any LRSCA decomposition of $M$; 
see Lemma~\ref{lemma:2}. 
Using a similar argument, the line containing the 3 aligned points not on the first identified line must be identified (because these 3 points cannot be covered by two other lines). 
Finally, the last two points define a single line that must be identified as well. 
This implies that the decomposition is essentially unique since identifying all hyperplanes generated by $r-1$ columns of $D$ makes $D$ unique up to permutation and scaling of its columns (see Corollary~\ref{corollary:1}), while $B$ is unique since $D$ has  full column rank. 
This will be proved rigorously and for general dimensions and sparsity in Theorem~\ref{theorem:seq}. 
\begin{figure}[h!]
\centering
    \begin{tikzpicture}[scale=1]

    \draw[thick] (-1,-1) -- (3,3);
    \draw[thick] (1,3) -- (5,-1);
    \draw[thick] (-1,0) -- (5.5,0);
    
    \draw[thick,fill=white] (0,0) circle (2pt);
    \draw[thick,fill=white] (2,2) circle (2pt);
    \draw[thick,fill=white] (4,0) circle (2pt);

    \draw[fill=black] (0.5,0.5) circle (2pt);
    \draw[fill=black] (0.90,0.90) circle (2pt);
    \draw[fill=black] (2.5,1.5) circle (2pt);
    \draw[fill=black] (2.25,1.75) circle (2pt);
    \draw[fill=black] (2,0) circle (2pt);
    \draw[fill=black] (2.5,0) circle (2pt);
    \draw[fill=black] (-0.5,0) circle (2pt);
    \draw[fill=black] (2.5,2.5) circle (2pt);


    \draw[fill=black] (5,0) circle (2pt);


    \draw[thick,fill=black] (6.5,1.4) circle (2pt);
    \node at (7.9,1.4) {data points};
    \draw[thick,fill=white] (6.5,1) circle (2pt);
    \node at (8.7,1) {unique decomposition};

\end{tikzpicture}
\caption{A scenario where $M$ admits an essentially unique LRSCA~\eqref{eq:model} when $k=r-1=2$.}
\label{fig:e3}
\end{figure}
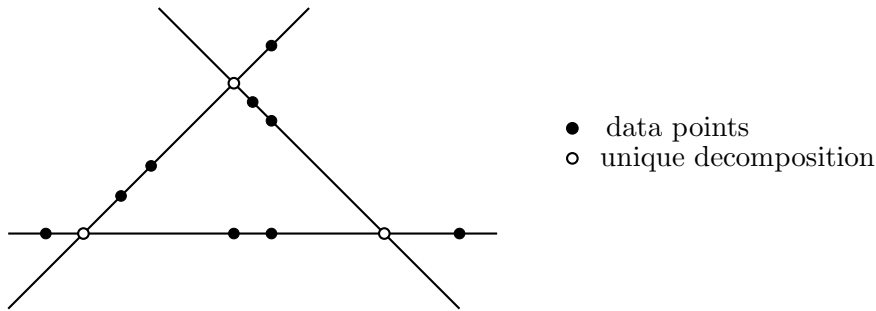 
 \end{example}

    To summarize, in this section, 
    we have made the following observations: 
    \begin{itemize}
        \item If $M$ follows an LRSCA model with sparsity set to $r-1$ where each hyperplane contains strictly less than $r$ points, then the factorization of $M$ is never unique. 
            
        \item If each these hyperplanes contain exactly $r$ points, 
        it may happen that $M$ that does not admit an essentially unique decomposition, 
        although this is unlikely in practice. 
        In fact, these hyperplanes can contain up to $r(r-2)$ points while identifiability is not guaranteed; see Algorithm~\ref{alg:ce} and Lemma~\ref{lemma:alg1}. 
    \end{itemize}

       \section{Identifiability of LRSCA}\label{sec:mainres}

In this section, we prove our main identifiability results for LRSCA~\eqref{eq:model}. 
In Section~\ref{sec:defprop}, we provide some definitions and properties that will be useful for our purpose. 
Although these properties are well-known, we provide the proofs to have the paper self contained. 
In Section~\ref{sec:maididres}, we prove our main theorems and discuss the tightness of our bounds on the number of data points needed to guarantee identifiability.

    \subsection{Definitions and properties} \label{sec:defprop}

Let us define the hyperplanes spanned by the columns of a matrix $D$, and a covering of a set of points by a set of hyperplanes. 

\begin{definition}
Given $D\in\mathds{R}^{p \times r}$ with rank $r$, we refer to the following subspaces 
\begin{equation}
    \mathds{F}_i(D) = \spann\left( \left\{d_j\right\}_{j\neq i} \right) \quad 1 \leq i \leq r, 
\end{equation}
as the $r$ hyperplanes generated by $D$. 
To simplify notation and when it is clear from the context, we will drop the
argument and simply write $\mathds{F}_i = \mathds{F}_i(D)$. 
Note that when $p > r$, $\mathds{F}_i$ are not hyperplanes but ($r-1$)-dimensional subspaces in $\mathbb{R}^p$. However, as explained in the beginning of Section~\ref{sec:relatedworks}, 
we could assume w.l.o.g.\ that $p=r$ which justifies this slight abuse of language ($\mathds{F}_i$'s are hyperplanes within $\spann(D)$). 
\end{definition}

\begin{definition}
    A set of subspaces $\left\{\mathds{F}_i\right\}_{i=1}^r$ 
    forms a covering set of the data points
    $\{m_j\}_{j = 1}^{n}$ if and only if for all $j=1,2,\dots,n$  
    there exists $1 \leq i \leq r$ such that $m_j \in \mathds{F}_i$. 
\end{definition}

Hyperplanes generated by $D$ in LRSCA~\eqref{eq:model} have the following properties.

\begin{lemma} \label{lemma:1}
    Let $M=DB$ follow the LRSCA model~\eqref{eq:model},  
    and let 
    $\{\mathds{F}_j\}_{j=1}^r$ be the hyperplanes generated by $D$. 
    Then, 
    \begin{enumerate}
        \item Hyperplanes $\mathds{F}_j$'s are distinct $r-1$ dimensional subspaces.
        \item Matrix $D$ is uniquely defined by its hyperplanes $\{\mathds{F}_j\}_{j=1}^r$ up to scaling and permutation: 
            \begin{equation}
                d_i \in \bigcap_{j\neq i} \mathds{F}_j \quad 1 \leq i \leq r,
                \label{eq:facets2A} 
            \end{equation}
            where $\bigcap_{j\neq i} \mathds{F}_j$ are 1-dimensional subspaces. 
            
        \item The set $\{\mathds{F}_j\}_{j\leq r}$ is a covering of the columns of $M$.
    \end{enumerate}
\end{lemma}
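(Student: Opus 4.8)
The plan is to establish the three statements in order, each following directly from the linear independence of the columns of $D$ (which holds since $\rank(D)=r$) together with the $\ell_0$ constraint on $B$.

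For statement 1, I would argue as follows. Because $\rank(D)=r$, the columns $d_1,\dots,d_r$ are linearly independent and form a basis of the $r$-dimensional subspace $\spann(D)$. Deleting one column leaves $r-1$ linearly independent vectors, so each $\mathds{F}_i(D)=\spann(\{d_j\}_{j\neq i})$ has dimension $r-1$. To see that the $\mathds{F}_i$ are pairwise distinct, fix $i\neq i'$: then $d_{i'}\in\mathds{F}_i$ since $d_{i'}$ is one of the spanning vectors of $\mathds{F}_i$, whereas $d_{i'}\notin\mathds{F}_{i'}=\spann(\{d_j\}_{j\neq i'})$, for otherwise $d_{i'}$ would be a linear combination of the remaining columns, contradicting their linear independence. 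Hence $\mathds{F}_i\neq\mathds{F}_{i'}$.

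For statement 2, I would first check the membership $d_i\in\bigcap_{j\neq i}\mathds{F}_j$: for each $j\neq i$ we have $i\neq j$, so $d_i$ appears among the spanning vectors of $\mathds{F}_j$ and thus $d_i\in\mathds{F}_j$; intersecting over $j\neq i$ gives the claim, so the intersection contains the line $\spann(d_i)$. For the reverse inclusion, take any $x\in\bigcap_{j\neq i}\mathds{F}_j$ and expand it in the basis $d_1,\dots,d_r$ of $\spann(D)$ as $x=\sum_{l}c_l d_l$; since $x\in\mathds{F}_j$ for each $j\neq i$, uniqueness of coordinates with respect to a basis forces $c_j=0$ for every $j\neq i$, whence $x=c_i d_i\in\spann(d_i)$. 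Therefore $\bigcap_{j\neq i}\mathds{F}_j=\spann(d_i)$ is exactly one-dimensional. Essential uniqueness then follows: if another full-rank $D'$ generates the same unordered collection of hyperplanes, each column $d'_i$ must lie in one of these one-dimensional intersections, i.e.\ in $\spann(d_{\sigma(i)})$ for some permutation $\sigma$, so $d'_i$ is a nonzero scalar multiple of $d_{\sigma(i)}$.

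For statement 3, I would fix a column $m_j=Db_j$ of $M$. By the LRSCA constraint, $\|b_j\|_0\leq k<r$, so $b_j$ has at least $\ell=r-k\geq 1$ zero entries; let $i$ be the index of one of them, i.e.\ $d_i$ does not contribute to $m_j$. Then $m_j=\sum_{l\neq i}(b_j)_l\, d_l\in\spann(\{d_l\}_{l\neq i})=\mathds{F}_i$, so every column of $M$ lies in at least one $\mathds{F}_i$, which is precisely the covering property. The only step that requires a little care is the upper bound on $\dimn\big(\bigcap_{j\neq i}\mathds{F}_j\big)$ in statement 2; I would handle it via uniqueness of coordinates in the basis $\{d_l\}$ rather than an inductive codimension count, which sidesteps any degenerate cases. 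The rest is immediate from linear independence and the sparsity constraint.
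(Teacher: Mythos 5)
Your proof is correct and follows the same route the paper takes (the paper merely sketches it): claims 1 and 2 from the linear independence of the columns of $D$ via coordinates in the basis $\{d_l\}$, and claim 3 from the sparsity of $B$ by locating a zero entry $b_{i,j}=0$. Your fleshed-out argument, including the identification $\bigcap_{j\neq i}\mathds{F}_j=\spann(d_i)$, is a faithful expansion of what the paper leaves to ``standard linear algebra.''
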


\begin{proof}
   The claims 1.~and 2.~follow directly from standard linear algebra and the LRSCA model~\eqref{eq:model}, since $\rank(D) = r$. 
   Claim 3.\ follows from the sparsity of $B$: since $||\vb_i||_0 = k < r$
   for all $i$, all points $m_i= Db_i$ belong to at least one hyperplane $\mathds{F}_j$ (namely, $j$ is such that $b_{i,j}=0$). 
\end{proof}

Using the Lemma above, one can easily link the essential uniqueness of an LRSCA decomposition 
and the uniqueness of the hyperplanes generated by $D$.  

\begin{corollary}\label{corollary:1}
Let $M=DB$ follow the LRSCA model~\eqref{eq:model}. 
Then the following are equivalent: 
    \begin{enumerate}
        \item $(D,B)$ is essentially unique.
        \item There is a unique covering set of $M$
            involving $r$ subspaces of dimension $r-1$. 
    \end{enumerate}
\end{corollary}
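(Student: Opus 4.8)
The plan is to prove the two implications separately, in both directions relying on Lemma~\ref{lemma:1}, which identifies a full-rank dictionary, up to permutation and scaling of its columns, with the unordered set of hyperplanes it generates. \emph{Proof that~2 implies~1.} Suppose $M$ admits a unique covering set of $r$ subspaces of dimension $r-1$, and let $(D',B')$ be an arbitrary LRSCA decomposition of $M$. By parts~1 and~3 of Lemma~\ref{lemma:1} the hyperplanes $\{\mathds{F}_j(D')\}_{j=1}^{r}$ form a covering set of the columns of $M$ made of $r$ distinct $(r-1)$-dimensional subspaces, and so do $\{\mathds{F}_j(D)\}_{j=1}^{r}$; the uniqueness assumption forces these two sets to coincide. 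By part~2 of Lemma~\ref{lemma:1}, each column of a dictionary is the one-dimensional intersection of the $r-1$ hyperplanes not associated with it, so two dictionaries with the same hyperplane set have the same columns up to permutation and scaling; hence $D=D'\Pi\matr{\Sigma}$ for a permutation matrix $\Pi$ and an invertible diagonal matrix $\matr{\Sigma}$. Substituting into $DB=M=D'B'$ and using that $D'$ has full column rank gives $\Pi\matr{\Sigma}B=B'$, i.e.\ $B=\matr{\Sigma}^{-1}\Pi^{T}B'$, so $(D,B)$ is essentially unique.

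\emph{Proof that~1 implies~2.} We argue by contraposition: assume the covering set involving $r$ subspaces of dimension $r-1$ is \emph{not} unique and show $(D,B)$ is not essentially unique. Since $\{\mathds{F}_j(D)\}_{j=1}^{r}$ is always such a covering set (Lemma~\ref{lemma:1}), non-uniqueness provides a second one, $\mathcal{G}=\{\mathds{G}_1,\dots,\mathds{G}_r\}$, consisting of $r$ distinct $(r-1)$-dimensional subspaces with $\mathcal{G}\neq\{\mathds{F}_j(D)\}_{j=1}^{r}$. From $\mathcal{G}$ we reconstruct a dictionary: for each $i$ let $d'_i$ span the line $\bigcap_{j\neq i}\mathds{G}_j$ obtained by intersecting the $r-1$ subspaces of $\mathcal{G}$ other than $\mathds{G}_i$, and set $D'=[d'_1,\dots,d'_r]$. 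Granting that $D'$ has full column rank, one gets $\mathds{F}_i(D')=\mathds{G}_i$ for each $i$ (both sides being $(r-1)$-dimensional, and one contained in the other), hence $B'=(D')^{\dagger}M$ is well defined; and since $\mathcal{G}$ covers the columns of $M$, each $m_i$ lies in some $\mathds{F}_i(D')$, so $\|b'_i\|_0\le k$. Therefore $(D',B')$ is a genuine LRSCA decomposition, and it is distinct from $(D,B)$ by part~2 of Lemma~\ref{lemma:1} because $\mathcal{G}\neq\{\mathds{F}_j(D)\}_{j=1}^{r}$.

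I expect the only delicate point to be the claim, used in the second implication, that the dictionary $D'$ reconstructed from a second covering set is full rank: one must rule out degenerate configurations in which several of the lines $\bigcap_{j\neq i}\mathds{G}_j$ coincide, so that $D'$ would be rank-deficient and $\mathcal{G}$ would not in fact arise from any dictionary. Handling this will use that the $\mathds{G}_j$ are $r$ pairwise distinct $(r-1)$-dimensional subspaces covering a matrix of rank $r$, possibly after first replacing $\mathcal{G}$ by a more convenient second covering set with the same properties. All remaining steps are elementary linear algebra already encapsulated in Lemma~\ref{lemma:1}.
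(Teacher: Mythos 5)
Your second implication (2 $\Rightarrow$ 1) is correct and is essentially the paper's own argument: by Lemma~\ref{lemma:1} the hyperplane sets of $D$ and of any other decomposition $D'$ are both covering sets of $M$ by $r$ distinct $(r-1)$-dimensional subspaces, uniqueness of the covering forces them to coincide, part~2 of Lemma~\ref{lemma:1} then recovers the dictionary from the common hyperplane set up to permutation and scaling, and full column rank transfers this to $B$. This is also the only direction of the corollary actually invoked in the proofs of Theorems~\ref{theorem:1} and~\ref{theorem:seq}.

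The first implication (1 $\Rightarrow$ 2), as you set it up, has a genuine gap --- in fact two. First, your sparsity check fails for $k<r-1$: knowing that a column $m_i$ lies in \emph{one} subspace of the covering only gives one zero in $b'_i$, i.e.\ $\|b'_i\|_0\le r-1$, whereas the model requires $\ell=r-k$ zeros; so ``$(D',B')$ is a genuine LRSCA decomposition'' does not follow. Worse, for $k\le r-2$ the literal reading of statement~2 is essentially never satisfiable: every column of $M$ lies on at least $\ell\ge 2$ of the hyperplanes $\mathds{F}_j(D)$, so the $r-1$ hyperplanes $\{\mathds{F}_j(D)\}_{j\ne j_0}$ already cover $M$, and appending any $(r-1)$-dimensional subspace distinct from the $\mathds{F}_j(D)$ yields a second covering set even when $(D,B)$ is essentially unique. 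Second, the full-rank issue you defer is a real obstruction rather than a technicality: a covering set $\mathcal{G}$ whose $r$ members all contain a common $(r-2)$-dimensional subspace (a pencil) has all intersections $\bigcap_{j\ne i}\mathds{G}_j$ equal, so no rank-$r$ dictionary generates it, and it is not clear how one would ``replace $\mathcal{G}$ by a more convenient second covering set.'' The way out --- and the reading implicit in the paper's own two-line proof --- is to interpret statement~2 as uniqueness of the covering among hyperplane sets generated by valid LRSCA decompositions $(D',B')$ of $M$. Under that reading, 1 $\Rightarrow$ 2 is just the contrapositive of your first half: if two decompositions generated the same hyperplane set, part~2 of Lemma~\ref{lemma:1} together with full column rank would make them equal up to permutation and scaling, so an essentially unique decomposition admits a unique such covering. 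No reconstruction of a dictionary from an abstract covering set is needed, and your attempted construction cannot be completed as written for general $k$.
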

\begin{proof}
This follows directly from Lemma~\ref{lemma:1}. 
In fact, given an essentially unique LRSCA decomposition, 
there is a unique set of hyperplanes that contains the
columns of $M$.  
Reversely, given a unique set of hyperplanes, a unique factor $D$ is obtained.
Since the data is contained in the union of these hyperplanes, the factor $B$ exists. It
can be uniquely determined knowing $D$ and $M$, since $\rank(D) = r$. 
\end{proof} 

In summary, studying the identifiability of LRSCA~\eqref{eq:model} is equivalent to studying the uniqueness of $r$ hyperplanes which union contains all the data points.

\subsection{Main identifiability results} \label{sec:maididres}


The following key lemma provides a condition under which a hyperplane generated by $D$ spanned by
the data points contained in the submatrix $M^{(1)}$ of $M$ has to be a hyperplane of any LRSCA decomposition of $M$. 
In other terms Lemma~\ref{lemma:2} focuses on the identifiability of a single
hyperplane generated by $r-1$ columns of $D$. 
Theorem~\ref{theorem:1} will use this result to uniquely identify all
hyperplanes generated by $D$ implying that $D$ is uniquely identified
(Lemma~\ref{lemma:1}). Theorem~\ref{theorem:seq} refines the results of
Theorem~\ref{theorem:1}, but features more involved technical conditions, which are relaxed in Corollary~\ref{corollary:2}.

\begin{lemma}\label{lemma:2}
    Let $M^{(1)}\in\mathds{R}^{p \times n_1}$ be a set of $n_1$ data points lying on a $r-1$
    dimensional subspace, that is, $\rank(M^{(1)})=r-1$, with  
    $\sparkn(M^{(1)}) = r$. 
    Let $D\in\mathds{R}^{p \times r}$,
    $B^{(1)} \in\mathds{R}^{r\times n_1}$ and $k<r$ be such that $M^{(1)}=DB^{(1)}$,
    $D$ is full column rank and $\|b_j^{(1)}\|_0\leq k$ for
    all $j$. Then the following holds:
    \begin{equation}
        n_1\geq \left\lfloor \frac{r(r-2)}{r-k}\right\rfloor +1 \quad \Rightarrow \quad             
        \text{there exists } j\text{ such that }\mathds{F}_j(D) = \spann(M^{(1)}) .   
    \end{equation}
\end{lemma}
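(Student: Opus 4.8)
The plan is to argue by contradiction using a double‑counting (incidence) bound. Suppose that $\spann(M^{(1)})$ coincides with none of the hyperplanes $\mathds{F}_j(D)$; I will show this forces $n_1 \leq \left\lfloor \tfrac{r(r-2)}{r-k}\right\rfloor$, which is the contrapositive of the claim. Throughout, write $\ell := r-k \geq 1$ and note that since $M^{(1)}=DB^{(1)}$ with $D$ full column rank, $\spann(M^{(1)})$ is an $(r-1)$‑dimensional subspace contained in the $r$‑dimensional space $\spann(D)$, and each $\mathds{F}_j(D)=\spann(\{d_i\}_{i\neq j})$ is an $(r-1)$‑dimensional subspace of $\spann(D)$.

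\textbf{Step 1: from sparsity to incidences.} First I would translate $\|b_j^{(1)}\|_0 \leq k$ into a geometric statement: each column $b_j^{(1)}$ has at least $\ell$ zero entries, and for every index $i$ with $b_{i,j}^{(1)}=0$ one has $m_j^{(1)} = \sum_{i'\neq i} b_{i',j}^{(1)} d_{i'} \in \mathds{F}_i(D)$. Hence every data point $m_j^{(1)}$ lies in at least $\ell$ of the $r$ hyperplanes generated by $D$. Restricting to the $(r-1)$‑dimensional space $\spann(M^{(1)})$, define $G_i := \mathds{F}_i(D)\cap \spann(M^{(1)})$ for $i=1,\dots,r$; then each $m_j^{(1)}$ belongs to at least $\ell$ of the subspaces $G_1,\dots,G_r$.

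\textbf{Step 2: bounding the $G_i$.} Under the contradiction hypothesis, $\mathds{F}_i(D)\neq \spann(M^{(1)})$ for every $i$, so these are two distinct $(r-1)$‑dimensional subspaces of the $r$‑dimensional space $\spann(D)$; by the dimension formula their intersection $G_i$ has dimension exactly $r-2$. Now I invoke $\sparkn(M^{(1)})=r$: any $r-1$ columns of $M^{(1)}$ are linearly independent, hence cannot all lie in a common $(r-2)$‑dimensional subspace of $\spann(M^{(1)})$ (in particular there are no zero columns). Consequently each $G_i$ contains at most $r-2$ of the columns of $M^{(1)}$.

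\textbf{Step 3: counting.} Counting the pairs $(j,i)$ with $m_j^{(1)}\in G_i$ two ways gives $n_1\,\ell \leq \sum_{i=1}^r |\{\,j : m_j^{(1)}\in G_i\,\}| \leq r(r-2)$, so $n_1 \leq r(r-2)/(r-k)$, and since $n_1\in\mathds{N}$ we get $n_1 \leq \lfloor r(r-2)/(r-k)\rfloor$, completing the contradiction. The argument is essentially elementary; the only points that need a little care are the two linear‑algebra facts used in Step 2 — that distinct equidimensional hyperplanes of $\spann(D)$ meet in exactly an $(r-2)$‑dimensional subspace, and that $\sparkn(M^{(1)})=r$ puts the data points of $M^{(1)}$ in general position within $\spann(M^{(1)})$ so that no $(r-2)$‑dimensional subspace captures $r-1$ of them. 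I do not anticipate a serious obstacle; the incidence bookkeeping is the crux but is routine.
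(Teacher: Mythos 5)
Your proof is correct and follows essentially the same route as the paper's: the spark condition caps each hyperplane (or its $(r-2)$-dimensional trace $G_i$ on $\spann(M^{(1)})$) at $r-2$ columns, sparsity puts each column on at least $r-k$ hyperplanes, and double counting yields $n_1(r-k)\leq r(r-2)$. Your Step~3 merely makes explicit the incidence count that the paper summarizes as ``one can check,'' so there is nothing to add.
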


\begin{proof}
Let us define $S_j =\{ m^{(1)}_i \ | \ m^{(1)}_i \in  \mathds{F}_j \}$ the set of columns
of $M^{(1)}$ contained in the $j$th hyperplane $\mathds{F}_j$ generated by $D$. 
If there exists some $j$ such that 
$|S_j| \geq r-1$, 
then $\spann(S_j)= \mathds{F}_j$ since, by assumption, $\sparkn(M^{(1)})=r$ and $\dimn(\mathds{F}_j) = r-1$. 
    Because $\rank(M^{(1)}) = r-1$, this implies that an
    hyperplane $\mathds{F}_j$ containing strictly more than $r-2$ data points
    of     $M^{(1)}$ satisfies $\mathds{F}_j = \spann(M^{(1)})$. 
   Moreover, every column of $M^{(1)}$ lies on at least $r-k$ hyperplanes since $\|b_i\|_0 \leq k$ for all $i$.   
Hence one can check that the maximum number of columns that $M^{(1)}$ can contain such that each
   of the $r$ hyperplanes generated by $D$ contains at most $r-2$ columns of $M^{(1)}$ 
 is given by 
    \[
        n_{\max} = \left\lfloor  \frac{r(r-2)}{r-k} \right\rfloor. 
    \]
    Hence $n_1 \geq n_{\max}+1  > n_{\max}$ implies  $\mathds{F}_j = \spann(M^{(1)})$ for some $j$,    which completes the proof. 
\end{proof}

Lemma~\ref{lemma:2} provides a bound on the number of points on a hyperplane that ensures that this hyperplane is contained in any LRSCA decomposition of a matrix containing these points.  
For instance, for $r=3$ and $k=2$, a hyperplane containing $r(r-2)+1 = 4$ columns of $M$ with spark 3 must be included in any LRSCA decomposition of $M$, while if it contains only 3 columns, 
it is not necessarily the case; see Examples~\ref{examp:2} and~\ref{examp:3}.  
For $r=4$ and $k=3$, 9 columns or more of $M$ belonging to a $r-1$ dimensional subspace
and having spark $r$ implies that the corresponding hyperplane will be contained in any 3-sparse decomposition of $M$. If such a hyperplane contain only 8 columns, its identifiability is not guaranteed; see the construction in Algorithm~\ref{alg:ce}. 

We can now use Lemma~\ref{lemma:2} to provide a sufficient condition to the uniqueness of LRSCA~\eqref{eq:model} by applying Lemma~\ref{lemma:2} to each hyperplane of a decomposition of $M$. 


\begin{theorem}\label{theorem:1}
    Let $M = DB$ satisfy the LRSCA model~\eqref{eq:model}. 
    The decomposition $(D,B)$ is essentially unique if there exists a collection of subsets
    $\{I_j\}_{j = 1}^r$ such that $M^{(j)} = M(:,I_j)$ satisfies the following conditions: 
   every column of $M^{(j)}$ belongs to the $j$th hyperplane generated by $D$, that is, 
   $B(j,I_j) = 0$ for all $j$, and for all $j$ 
   \begin{equation} \label{cond:theorem1}
        \sparkn(M^{(j)}) = r 
        \quad
        \text{ and }  
        \quad 
      |I_j| \geq \left\lfloor \frac{r(r-2)}{r-k} \right\rfloor +1.
    \end{equation}
\end{theorem}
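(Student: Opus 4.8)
The plan is to derive Theorem~\ref{theorem:1} from Corollary~\ref{corollary:1}: it suffices to show that every LRSCA decomposition of $M$ generates the same collection of $r$ hyperplanes as $(D,B)$, since then $M$ has a unique covering set by $r$ subspaces of dimension $r-1$. So I would fix an arbitrary decomposition $M = D'B'$ satisfying~\eqref{eq:model}, with $D'$ full column rank and the columns of $B'$ being $k$-sparse, and aim to prove $\{\mathds{F}_i(D')\}_{i=1}^r = \{\mathds{F}_j(D)\}_{j=1}^r$.

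The main step is to apply Lemma~\ref{lemma:2} once for each $j \in \{1,\dots,r\}$, with its dictionary instantiated as $D'$ and its data matrix as $M^{(j)} = M(:,I_j)$. I first check the hypotheses. By assumption every column of $M^{(j)}$ lies in $\mathds{F}_j(D)$, which has dimension $r-1$, and $\sparkn(M^{(j)}) = r$; combining these with the standard fact that $\sparkn(M^{(j)}) = r$ implies $\rank(M^{(j)}) \geq r-1$ (recalled in the Notations) gives $\rank(M^{(j)}) = r-1$ and $\spann(M^{(j)}) = \mathds{F}_j(D)$. Writing $M^{(j)} = D'B'(:,I_j)$ exhibits the factorization required by Lemma~\ref{lemma:2}, with $D'$ full column rank and the columns of $B'(:,I_j)$ being $k$-sparse, and the size condition $|I_j| \geq \lfloor r(r-2)/(r-k)\rfloor + 1$ is exactly~\eqref{cond:theorem1}. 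Lemma~\ref{lemma:2} therefore produces an index $\sigma(j)$ with $\mathds{F}_{\sigma(j)}(D') = \spann(M^{(j)}) = \mathds{F}_j(D)$.

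Doing this for all $j$ gives $\mathds{F}_j(D) = \mathds{F}_{\sigma(j)}(D')$ for every $j$. Since the hyperplanes $\mathds{F}_1(D),\dots,\mathds{F}_r(D)$ are pairwise distinct by Lemma~\ref{lemma:1} (item~1), $\sigma$ is injective, hence a permutation of $\{1,\dots,r\}$, so the two sets of hyperplanes coincide; by Lemma~\ref{lemma:1} (item~3) each also covers the columns of $M$. Hence the covering set of $M$ by $r$ subspaces of dimension $r-1$ is unique, and Corollary~\ref{corollary:1} yields essential uniqueness of $(D,B)$. (One may equivalently finish via Lemma~\ref{lemma:1}, item~2: $D'$ is determined by its hyperplanes up to column permutation and scaling, so $D' = D\Pi\Sigma$ for a permutation matrix $\Pi$ and an invertible diagonal $\Sigma$, and then $B' = \Sigma^{-1}\Pi^{\T} B$ because $D$ has full column rank.)

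I do not expect a genuine obstacle here; the theorem is essentially a bookkeeping wrapper that lifts the single-hyperplane statement of Lemma~\ref{lemma:2} to all $r$ hyperplanes at once. The two points that require a little attention are (i) upgrading ``the columns of $M^{(j)}$ lie in $\mathds{F}_j(D)$'' to ``$\spann(M^{(j)}) = \mathds{F}_j(D)$'', which is what forces the hyperplane of $D'$ returned by Lemma~\ref{lemma:2} to be \emph{equal} to $\mathds{F}_j(D)$ rather than merely related to it, and (ii) the injectivity of $\sigma$, so that the two collections of hyperplanes agree as sets; both are immediate from Lemma~\ref{lemma:1}.
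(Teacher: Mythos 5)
Your proof is correct and follows essentially the same route as the paper: apply Lemma~\ref{lemma:2} to each $M^{(j)}$ (with the competing dictionary $D'$) to identify every hyperplane $\mathds{F}_j(D)$, then conclude essential uniqueness via Corollary~\ref{corollary:1} (equivalently Lemma~\ref{lemma:1}, item~2). You simply spell out details the paper leaves implicit, namely that $\spann(M^{(j)}) = \mathds{F}_j(D)$ and that the assignment $j \mapsto \sigma(j)$ is a permutation because the hyperplanes of $D$ are distinct.
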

\begin{proof}
By assumption, $\mathds{F}_j = \spann(M^{(j)})$. 
Then, uniqueness of $(B,D)$ follows directly from Corollary~\ref{corollary:1} (it is equivalent to identify $D$ or its hyperplanes) 
and Lemma~\ref{lemma:2} which implies that all hyperplanes 
generated by $D$ are identified under the condition~\eqref{cond:theorem1}.  
\end{proof}

A simpler way to phrase Theorem~\ref{theorem:1} is the following: an LRSCA decomposition $M=DB$ is essentially unique if on each subspace spanned by all but one column of $D$, 
there are $\left\lfloor \frac{r(r-2)}{r-k} \right\rfloor +1$ data points with spark $r$.

\subsubsection{Tightness of Theorem~\ref{theorem:1}} \label{tightnessth1}

Theorem~\ref{theorem:1} can be used to compute a minimum value of the number $n$ of columns of a matrix $M$ satisfying the assumptions of Theorem~\ref{theorem:1}.  
Since each column of $M$ belongs to $r-k$ hyperplanes of $D$, it may belong to $r-k$ subsets $I_j$'s hence 
the condition~\eqref{cond:theorem1} implies that   
\[
n \geq \frac{\sum_{j=1}^r |I_j|}{r-k} \geq \frac{r}{r-k} \, \left(\left\lfloor \frac{r(r-2)}{r-k} \right\rfloor  +1 \right).  
\]    
 For $k=r-1$, the bound gives $n \geq r^3-2r^2+r$ which is tight up to a constant $r$ since Algorithm~\ref{alg:ce} provides distinct LRSCA decompositions with $n = r^3-2r^2$ 
satisfying the conditions $\rank(M^{(j)}) = r-1$ and  $\sparkn(M^{(j)}) = r$
with $|I_j| = r(r-2)$. 
For $k=1$, clearly $|I_j| = 1$ for all $j$ ensures uniqueness hence $n=r$ is enough. 
Our bound can be simplified as follows: since $(r-1)^2 > r(r-2)$, 
we have $\left\lfloor \frac{r(r-2)}{r-1} \right\rfloor = r-2$ hence
\[
\frac{r}{r-1} \, \left(\left\lfloor \frac{r(r-2)}{r-1} \right\rfloor  +1 \right)
= 
\frac{r}{r-1} \, \left( r-1 \right)
= r, 
\] 
hence is tight. 


If the columns of the matrix $B$ contain a number of zero entries proportional
to $r$, that is, \jc if the co-sparsity level satisfies \fin $\ell=\alpha r$ for some constant $\alpha \in [1/r,(r-1)/r]$ so that $k = r(1-\alpha)$, our bound requires $n \geq \frac{r-2+\alpha}{\alpha^2}$ which is proportional
 to $r$ hence is tight up to a (possibly large) constant factor $\frac{1}{\alpha^2}$ (since clearly $n \geq r$ is a
 necessary condition for essential uniqueness). 
 This is rather interesting to observe since in many practical problems the
 sparsity is often proportional to $r$ (e.g., 10\% of zero entries).  
This is interesting to put in perspective with the $\mathcal{O}\left(r\log(r)\right)$ 
data points required in a probabilistic setting, 
when columns of $B$ are generated randomly, which can be seen as an instance of
the coupon collector problem as shown by Spielman \textit{et al.}~\cite{spielman2012exact}.

\begin{remark} 
Note that Theorem~\ref{theorem:1} can be combined with probabilistic arguments to obtain probabilistic bounds. 
However such bounds would be weaker for example than the one of Spielman \textit{et al.}~\cite{spielman2012exact} because Theorem~\ref{theorem:1} does use the assumption that the data points are sampled in general position.  
For example, let us consider the case $k=r-1$ and assume that the coefficients are sampled from the product of a Bernouilli distribution (for placing zeros) and a Uniform distribution (for sampling non-zeros). 
We need to have $m=\mathcal{O}(r^2)$ points in each subspace. 
The coupon collector problem requires on average $O(r \log(r))$ points to have a single point on each of the $r$ hyperplanes. Hence, to have $m$ points on each hyperplane, we need on average to sample at most $O( m r \log(r) )$ points (this can be proved using the linearity of the expected value), 
that is, $O( r^3 \log(r))$ points, which is, 
up to the factor $\log(r)$, asymptotically the same as our deterministic bound. 
Note that slightly thighter bounds can be obtained using a more refined analysis of the generalized coupon problem that needs to collect $m$ coupon of each type, also known as the double dixie cup problem~\cite{newman1960double}. 
\end{remark} 

 For $\alpha=1/r$ (resp.\ $(r-1)/r$), that is, $\ell=1$ and $k=r-1$, 
 we recover the bound $n = \Omega(r^3)$ (resp.\ $n \geq r$). 
For other values, it is more difficult to prove tightness of the bound. 
For example,  for  a number of zero entries in the columns of $B$ proportional
to $\sqrt{r}$, that is, $\alpha = 1/\sqrt{r}$, we get $n = \Omega(r^2)$ which
we do not know whether it is tight up to a constant factor. Generalizing the construction of Algorithm~\ref{alg:ce} 
will require to intersect $\ell$ hyperplanes among the $\mathds{F}_j^{(1)}$'s with
$\ell$ hyperplanes among the $\mathds{F}_j^{(2)}$'s (to guarantee $B^{(1)}$ and
$B^{(2)}$ to be \jc $k$-sparse): \fin  
there are many such intersections and it is not clear how to count the points that can be generated to avoid degeneracy, that is, to guarantee that the spark the the data points on each hyperplane generated by $D$ is $r-1$  
(note that $\ell < r/2$ is required for these intersections to be non-empty). 
Proving the tightness of the bounds in these cases is a direction of further research. 



\paragraph{Brute-force algorithm for LRSCA}  
 As a by-product of the proposed
identifiability result, we also exhibit an algorithm that, under the conditions of Theorem~\ref{theorem:1}, outputs the unique solution to the LRSCA problem in a finite number of steps; this is similar what was proposed in~\cite{Georgiev2005, aharon2006uniqueness}. 
This algorithm iterates the following steps until either all
data points belong to an identified hyperplane (in which case the algorithm has successfully found the unique solution) or when all different possible combinations of data points are checked: 
\begin{enumerate}
        \item Choose $p=\left\lfloor \frac{r(r-2)}{r-k} \right\rfloor +1$ data points in $M$. 
        \item Check if these data points satisfy the spark condition. If not, return to step~1. 
        \item Check if these data points belong to an $(r-1)$-dimensional subspace. If this is the case, then their span must be one of the hyperplanes $\mathcal{F}_j(D)$. 
\end{enumerate}
This provides a practical way, albeit computationally impractical with up to $\binom{n}{p}$ combinations of points to check, 
to assert the existence of a solution to LRSCA and find the
unique solution under the assumptions of Theorem~\ref{theorem:1}.

\subsubsection{Stronger identifiability result} 

\jc Theorem~\ref{theorem:1} is a significant improvement to already known
sufficient identifiability conditions for complete dictionary learning.
However, 
it does not allow to show identifiability of Example~\ref{examp:3} ($r=3$,
$k=2$) in Section~\ref{sec:theory}. In fact, we have shown that only $n=9$ points may be sufficient for LRSCA to be identifiable, 
while Theorem~\ref{theorem:1} requires $n\geq 12$. 
In what follows, we
propose Theorem~\ref{theorem:seq} featuring somewhat involved
conditions but weaker than Theorem~\ref{theorem:1}. 
This allows us to obtain Corollary~\ref{corollary:2} which conditions are as
simple as that of Theorem~\ref{theorem:1} while reducing roughly by half the sample size required by Theorem~\ref{theorem:1} in the particular case $k=r-1$. \fin 

The idea is the following: 
in Theorem~\ref{theorem:1}, 
we have identified each hyperplane independently of the others. 
However, the fact that a hyperplane is identified influences the bound on the
number of points needed on the other hyperplanes; 
see Example~\ref{examp:3} where the second (resp.\ third) hyperplane only needs to contain 3 points (resp.\ 2 points) to be identifiable, instead of 4 as required by Theorem~\ref{theorem:1}.   
Hence using Lemma~\ref{lemma:2} sequentially, instead of simultaneously, for all hyperplanes, the following stronger result can be derived. 
\begin{theorem}\label{theorem:seq}
    Let $M = DB$ satisfy the LRSCA model~\eqref{eq:model}.  
    The decomposition $(D,B)$ is essentially  unique if 
    there exists a collection of subsets 
    $\left\{ I_j\right\}_{j=1}^{r}$ such that $M^{(j)}=M(:,I_j)$ satisfies the following
    conditions: 
\begin{enumerate}
 \item Every column of $M^{(j)}$ belongs to the jth hyperplane $\mathds{F}_j$
     of $D$, that is, for all~$j$, $B(j,I_j)=0$.
    
 \item We have for all $j$  
    \begin{equation}  \label{cond:theoremseq} 
         \begin{array}{l}
     \sparkn(M^{(j)})=r 
     \quad \text{ and } \quad 
      |I_j|
        \geq \left\lfloor \frac{(r-j+1)(r-2) + \sum_{i\in I_j} c_{i \rightarrow j} }{r-k} \right\rfloor +1 , 
        \end{array}
     \end{equation} 
 where the quantity 
 \[
 c_{i \rightarrow j} 
 = \left| \left\{ p \, | \, p<j, m_i \in \mathds{F}_p  \right\} \right| 
 = \left| \left\{ p \, | \, p<j, B(p,i) = 0  \right\} \right| 
 \]
 is defined for all $1 \leq j \leq r$ and all $i\in I_j$. 
 Given a point $m_i$ that belongs to the hyperplane $\mathds{F}_j$, 
 the quantity $c_{i \rightarrow j}$ is the number of hyperplanes $p$ preceding $j$ (that is, $p < j$) to which $m_i$ belongs to. 
  Note that the order in which we sort the hyperplanes plays a crucial role, as opposed to Theorem~\ref{theorem:1}, 
 since $c_{i \rightarrow j}$ depends on this ordering.  

\end{enumerate} 
\end{theorem}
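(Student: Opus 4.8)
The plan is to prove Theorem~\ref{theorem:seq} by induction on the hyperplanes $\mathds{F}_1, \dots, \mathds{F}_r$ (in the given fixed order), showing that each $\mathds{F}_j$ is forced to appear in \emph{any} LRSCA decomposition of $M$. By Corollary~\ref{corollary:1}, once all $r$ hyperplanes of the given decomposition are shown to be the unique covering set of $M$ by $r$ subspaces of dimension $r-1$, essential uniqueness follows. So the whole content is: assuming $\mathds{F}_1, \dots, \mathds{F}_{j-1}$ have already been identified (i.e.\ are forced to be hyperplanes of every decomposition), show that $\mathds{F}_j$ is identified too, using condition~\eqref{cond:theoremseq}.

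The key refinement over Theorem~\ref{theorem:1} is the following bookkeeping. Fix an alternative decomposition $M = D'B'$ and consider the data points $M^{(j)} = M(:,I_j)$, which all lie in the $(r-1)$-dimensional subspace $\mathds{F}_j = \spann(M^{(j)})$ and have spark $r$. Suppose, for contradiction, that $\mathds{F}_j$ is not one of the hyperplanes $\mathds{F}'_1, \dots, \mathds{F}'_r$ of $D'$. As in the proof of Lemma~\ref{lemma:2}, no hyperplane $\mathds{F}'_p$ can contain $r-1$ or more of the points of $M^{(j)}$ — otherwise $\spann(M^{(j)}) = \mathds{F}'_p$ by the spark-$r$ assumption and we would be done — so each $\mathds{F}'_p$ contains at most $r-2$ points of $M^{(j)}$. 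Now count incidences: each point $m_i$, $i \in I_j$, lies on at least $r-k$ of the hyperplanes $\mathds{F}'_p$ since $\|b'_i\|_0 \le k$; but among those hyperplanes on which $m_i$ lies, exactly $c_{i\to j}$ of them coincide with the \emph{already-identified} hyperplanes $\mathds{F}_1, \dots, \mathds{F}_{j-1}$ (which are hyperplanes of $D'$), and none of those identified hyperplanes can cover $r-1$ points of $M^{(j)}$ unless it equals $\spann(M^{(j)})=\mathds{F}_j$ — but an identified hyperplane $\mathds{F}_p$ with $p<j$ equal to $\mathds{F}_j$ would contradict that the hyperplanes $\mathds{F}_1,\dots,\mathds{F}_r$ are distinct (Lemma~\ref{lemma:1}). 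Hence each such $\mathds{F}_p$ ($p<j$) already accounts for at most $r-2$ points and, crucially, there are only $j-1$ of them (not $r$). So the $\ge (r-k)|I_j|$ incidences between the points of $M^{(j)}$ and hyperplanes of $D'$ are distributed among: at most $j-1$ ``old'' hyperplanes, each taking $\le r-2$ incidences, plus the ``new'' hyperplanes of $D'$; the number of incidences going to new hyperplanes is $(r-k)|I_j| - \sum_{i\in I_j} c_{i\to j}$, and these too are distributed among hyperplanes each carrying $\le r-2$ points. The counting that makes Theorem~\ref{theorem:1} work gives $\sum_{i\in I_j} c_{i \to j} + (r-2)(r-j+1) \ge (r-k)|I_j|$ as a necessary consequence — contradicting~\eqref{cond:theoremseq}. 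Therefore $\mathds{F}_j$ must be a hyperplane of $D'$, completing the induction.

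The step I expect to be the main obstacle is getting the incidence count exactly right — in particular justifying why only $j-1$ ``old'' hyperplanes plus $r-j+1$ ``new'' ones appear, i.e.\ why the points of $M^{(j)}$ cannot be spread over more than $r$ hyperplanes of $D'$ in total (there are only $r$), and why an old hyperplane $\mathds{F}_p$ with $p<j$ cannot be re-used as one of the would-be covering hyperplanes of $\spann(M^{(j)})$ without forcing $\mathds{F}_p = \mathds{F}_j$. One has to be careful that $c_{i\to j}$ counts old hyperplanes through $m_i$ and that the remaining $\ge (r-k) - c_{i\to j}$ incidences from $m_i$ must land on the $r-j+1$ hyperplanes $\mathds{F}_j', \dots$ distinct from the old ones (equivalently $\mathds{F}_j,\dots,\mathds{F}_r$ when the decompositions agree so far) — and that the floor in~\eqref{cond:theoremseq} is exactly the threshold $n_{\max}$ beyond which this incidence budget is exceeded, forcing some hyperplane of $D'$ to contain $\ge r-1$ points of $M^{(j)}$ and hence to equal $\mathds{F}_j$. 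The base case $j=1$ is just Lemma~\ref{lemma:2} (where $c_{i\to 1}=0$ and $r-j+1=r$, so~\eqref{cond:theoremseq} reduces exactly to~\eqref{cond:theorem1}).
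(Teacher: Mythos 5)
Your proposal is correct and follows essentially the same route as the paper's proof: induction over the ordered hyperplanes, a per-hyperplane incidence count refining Lemma~\ref{lemma:2} in which the at most $r-j+1$ unidentified hyperplanes each carry at most $r-2$ points of $M^{(j)}$ (by the spark argument) while the identified ones contribute exactly $\sum_{i\in I_j} c_{i\rightarrow j}$ incidences, contradicting $(r-k)|I_j| > (r-j+1)(r-2)+\sum_{i\in I_j} c_{i\rightarrow j}$, and finally Corollary~\ref{corollary:1} to pass from uniqueness of the covering hyperplanes to essential uniqueness of $(D,B)$. The only cosmetic difference is that you subtract the old-hyperplane incidences before budgeting the new ones, whereas the paper caps the total; these are algebraically identical.
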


Before we give the proof, let us try to shed some light on the conditions of Theorem~\ref{theorem:1}. 
The quantity $\sum_{i\in I_j} c_{i \rightarrow j}$ is smaller than $(j-1)(r-2)$: 
In fact, the intersection of two hyperplanes has dimension $r-2$ hence one cannot pick more than $r-2$ points from hyperplane $j$ in hyperplane $k \neq j$ because of the spark condition $\sparkn(M^{(j)})=r$. 
Therefore, the condition~\eqref{cond:theoremseq} of Theorem~\ref{theorem:seq} is weaker than condition~\eqref{cond:theorem1} of Theorem~\ref{theorem:1}.  


Let us consider the case $k = r-1$ and assume $c_{i \rightarrow j} = 0$ for all $i,j$ (this can actually be assumed w.l.o.g., see Lemma~\ref{lem:cij}) hence the last requirement of the third inequality in condition~\eqref{cond:theoremseq} becomes $|I_j| \geq  (r-j+1)(r-2) +1$. 
For $r=3$ and $k=2$, we know that having 4 points with spark 3 on each hyperplane is enough for identifiability (Theorem~\ref{theorem:1}).   
From Theorem~\ref{theorem:seq}, 
we know that the following weaker conditions will be enough:  
(i)~4 points with spark 3 on a first hyperplane as in Theorem~\ref{theorem:1}, 
(ii)~3 points with spark 3 on a second hyperplane that do not belong to the first hyperplane, and 
(iii)~2 points with spark 3 on a third hyperplane that do not belong to the first two hyperplanes. 
Figure~\ref{examp:3} illustrates such a unique decomposition. 


\begin{proof}[Proof of Theorem~\ref{theorem:seq}] Let us prove the result by induction.  

    First, consider the first set of indices $I_1$ satisfying~\eqref{cond:theoremseq}. 
    Since no hyperplane has already been identified, we have $c_{i \rightarrow 1} = 0$ for all $i\in I_1$. By applying Lemma~\ref{lemma:2} on $I_1$, as in Theorem~\ref{theorem:1}, 
    the first hyperplane $\mathds{F}_1$ must be contained in any decomposition of $M$. 

    Now suppose that $j-1$ hyperplanes are correctly identified where $j \geq 2$, that is, 
    the hyperplanes $\mathds{F}_{p}$ for $p < j$ 
    are identified (that is, they must belong to any decomposition).  
    Assume $M$ admits (at least) two different decompositions: one
    corresponding to the sought hyperplanes $\mathds{F}_{p}$ 
    ($1 \leq p \leq r$) and one with hyperplanes 
     $\mathds{F}'_{p}$ ($1 \leq p \leq r$). Since the first $j-1$ hyperplanes are correctly identified, by induction, 
     $\mathds{F}'_{p} = \mathds{F}_{p}$ for $p \leq j-1$ 
     (w.l.o.g.\ we assume the first $j-1$ hyperplanes $\mathds{F}'_{p}$'s correspond to the first $j-1$ hyperplanes $\mathds{F}_{p}$'s, otherwise we reorder them accordingly).   
    If $\mathds{F}_{j} = \mathds{F}'_{k}$ for some $k \geq j$, $\spann(M^{(j)})$ is a common hyperplane of both decompositions, and we move to the next $j$. 
    Otherwise $M^{(j)}$ is covered by the set of $r$ hyperplanes
    $\{\mathds{F}'_{l}\}_{l\in[1,r]}$
    that does not contain $\spann(M^{(j)})$.
    Since $\mathds{F}'_{p} = \mathds{F}_{p}$ for $p < j$, the
    hyperplanes $\mathds{F}'_p$ can be divided in two classes: hyperplanes that are identified ($p < j$) and hyperplanes that are free ($p \geq j$), that is, that are not necessarily identified. 

    Hyperplanes that are free may each contain only up to $r-2$ columns of
    $M^{(j)}$, by the same argument as in Lemma~\ref{lemma:2}: 
    In fact, if say $\mathds{F}'_p$ for some $p\geq j$ contains 
    $r-1$ columns of $M^{(j)}$, then because $\sparkn(M^{(j)})=r$, 
    the span of these $r-1$ columns equals $\spann(M^{(j)})$, 
    which contradicts our hypothesis that the hyperplane corresponding to $\spann(M^{(j)})$ is not identified. 
    Therefore, the maximal number of columns of $M^{(j)}$ on all the free
    hyperplanes is $(r-j+1)(r-2)$. 

    The identified hyperplanes $\mathds{F}_{p}$ for $p<j$ may also contain columns of $M^{(j)}$. By definition of $c_{i \rightarrow j}$, 
    the number of times the identified hyperplanes touch points of $M^{(j)}$ in $I_j$ is given by $\sum_{i \in I_j} c_{i \rightarrow j}$.      
    Hence, the total number times all hyperplanes touch points in $I_j$ is at most 
    \[
         (r-j+1)(r-2) + \sum_{i\in I_j}  c_{i \rightarrow j}. 
    \]
    Now, since the $r$ hyperplanes $\mathds{F}'_p$ must correspond to a valid decomposition, that is, a decomposition that is $r-k$ sparse, each column of $M^{(j)}$ belongs to at least $r-k$ hyperplanes.  
    This allows us to conclude: 
    the total number times the hyperplanes $\{\mathds{F}'_{l}\}_{l\in[1,r]}$ touch points in $M^{(j)}$ must be $(r-k)|I_j|$  while it is at most  $(r-j+1)(r-2) + \sum_{i\in I_j}  c_{i \rightarrow j}$. 
    Therefore, 
    \[
        | I_j | > \frac{ (r-j+1)(r-2) + \sum_{i\in I_j}  c_{i \rightarrow j}}{r-k}
    \]
    leads to a contradiction and the hyperplane $\mathds{F}_j$ must be identified. 
    
        Finally, since all hyperplanes generated by $D$ have been identified, $(B,D)$ is essentially unique (Corollary~\ref{corollary:1}) which concludes the proof. 
     \end{proof}

  \paragraph{Tightness of Theorem~\ref{theorem:seq}} 
  Since Theorem~\ref{theorem:seq} is stronger than Theorem~\ref{theorem:1}, 
  it is also tight when $k=1$, and 
  asymptotically tight when $k$ is a fixed proportion of $r$ (see the discussion after Theorem~\ref{theorem:1}). 
However, Theorem~\ref{theorem:seq} has the advantage to be tight for $k=r-1$ (which is not the case of Theorem~\ref{theorem:seq} which is tight only up to a factor $r$). 
We will see in Lemma~\ref{lem:cij} that we may assume w.l.o.g.\ that $c_{i \rightarrow j} \leq r-k-1$ hence $c_{i \rightarrow j} = 0$ for $k=r-1$. 
Therefore, when $k=r-1$, the condition on the cardinality of the sets $I_j$'s simplifies to 
$|I_j| \geq (r-j+1)(r-2) +1$. 
Using a similar construction as in Algorithm~\ref{alg:ce}, 
we can generated matrices with two distinct LRSCA decompositions 
that satisfies all the conditions of Theorem~\ref{theorem:seq} with 
$|I_j| = (r-j+1)(r-2) +1$ for all $j$ except for some $k$ for which $|I_k| = (r-k+1)(r-2)$. (Note that the two decompositions will have $k-1$ hyperplanes in common).

Although Theorem~\ref{theorem:seq} has a more relaxed condition 
        than Theorem~\ref{theorem:1} 
        on the cardinalities of the index sets $I_j$'s, 
        it is more difficult to check as we do not know a priori the values of the quantities $c_{i \rightarrow j}$ (except for $k=r-1$; see Lemma~\ref{lem:cij}).  
        In order to simplify this condition in Theorem~\ref{theorem:seq}, 
        we derive an upper bound for the quantities
        $c_{i \rightarrow j}$. 
        This will lead to a weaker identifiability condition than Theorem~\ref{theorem:seq} but easier to write down. 
 
 \begin{lemma} \label{lem:cij} 
 In Theorem~\ref{theorem:seq}, one may assume w.l.o.g.\ that 
 	$c_{i \rightarrow j} \leq r-k-1$  for all $1 \leq j \leq r$ and $i \in I_j$. 
 \end{lemma}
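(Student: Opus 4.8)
The statement should be read as: given any collection $\{I_j\}_{j=1}^{r}$ satisfying the hypotheses of Theorem~\ref{theorem:seq}, one can produce another collection $\{I'_j\}_{j=1}^{r}$ that satisfies the same hypotheses and in addition has $c_{i\rightarrow j}\le r-k-1$ for every $j$ and every $i\in I'_j$. The plan is to obtain $\{I'_j\}$ simply by deletion: set
\[
I'_j \;=\; \bigl\{\, i\in I_j \;:\; c_{i\rightarrow j}\le r-k-1 \,\bigr\},
\]
i.e.\ discard from $I_j$ every data point that already lies on at least $r-k$ of the preceding hyperplanes $\mathds{F}_1,\dots,\mathds{F}_{j-1}$. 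The extra property and condition~1 of Theorem~\ref{theorem:seq} are then immediate ($I'_j\subseteq I_j$), and --- since $c_{i\rightarrow j}$ depends only on $D$, the chosen ordering and the index $j$, not on $I_j$ --- removing points does not disturb the other values $c_{i\rightarrow j}$. It remains to re-verify condition~2, i.e.\ the spark condition and the cardinality bound, for $\{I'_j\}$.

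First I would handle the cardinality bound. Since $|I_j|$ is a positive integer, the bound $|I_j|\ge \left\lfloor\frac{(r-j+1)(r-2)+\sum_{i\in I_j}c_{i\rightarrow j}}{r-k}\right\rfloor+1$ is equivalent to $(r-k)\,|I_j| > (r-j+1)(r-2)+\sum_{i\in I_j}c_{i\rightarrow j}$, that is, to $\sum_{i\in I_j}\bigl[(r-k)-c_{i\rightarrow j}\bigr] > (r-j+1)(r-2)$. Every deleted index $i$ has $c_{i\rightarrow j}\ge r-k$, hence contributes a non-positive term to this sum, so deleting it only increases the left-hand side; therefore $\{I'_j\}$ still satisfies this inequality, hence the original cardinality bound.

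The step I expect to be the real obstacle is checking that the spark condition $\sparkn\bigl(M(:,I'_j)\bigr)=r$ survives the deletion. The property that every $r-1$ columns are linearly independent is inherited from $M(:,I_j)$, so the point is to rule out a rank drop, i.e.\ to show $|I'_j|\ge r-1$ (equivalently $\spann(M(:,I'_j))=\mathds{F}_j$). For this I would exploit the sparsity model: a kept point $m_i$ ($i\in I'_j$) lies on $\mathds{F}_j$ and, being $k$-sparse, on at least $r-k$ hyperplanes of $D$ in total; at most $r-j$ of these can have index $>j$ and one of them is $\mathds{F}_j$ itself, so at least $j-k-1$ of them have index $<j$, i.e.\ $c_{i\rightarrow j}\ge\max(0,\,j-k-1)$ for all $i\in I'_j$. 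Substituting $\sum_{i\in I'_j}c_{i\rightarrow j}\ge\max(0,\,j-k-1)\,|I'_j|$ into the cardinality inequality just established for $\{I'_j\}$, a short case split on whether $j\le k+1$ gives in each case $|I'_j|>r-2$, hence $|I'_j|\ge r-1$ and $\sparkn(M(:,I'_j))=r$. With all three requirements re-verified, $\{I'_j\}$ satisfies the hypotheses of Theorem~\ref{theorem:seq} together with $c_{i\rightarrow j}\le r-k-1$, which is exactly the claimed w.l.o.g.\ reduction.
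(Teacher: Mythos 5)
Your proof is correct, but it follows a genuinely different route from the paper's. The paper removes one offending index $p$ (with $c_{p \rightarrow j} \geq r-k$) at a time and splits into two cases: when $|I_j| \geq r$ it simply deletes $p$, but when $|I_j| = r-1$ it must swap in a replacement point $m_h$ with $c_{h \rightarrow j} \leq r-k-1$, and the existence of $m_h$ is argued by invoking the uniqueness conclusion of Theorem~\ref{theorem:seq} itself and perturbing the hyperplane $\mathds{F}_j$. You instead delete \emph{all} offending indices at once and close the only real gap --- survival of the spark condition --- by a direct counting argument: since each kept point is $k$-sparse and lies on $\mathds{F}_j$, it satisfies $c_{i \rightarrow j} \geq \max(0, j-k-1)$, so $(r-k) - c_{i\rightarrow j} \leq \min(r-k,\, r-j+1)$, and plugging this into the (deletion-stable) inequality $\sum_{i \in I_j'} \bigl[(r-k) - c_{i \rightarrow j}\bigr] > (r-j+1)(r-2)$ forces $|I_j'| > r-2$, hence $|I_j'| \geq r-1$ and $\sparkn(M(:,I_j')) = r$. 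This buys two things: your argument is self-contained (it never leans on the conclusion of Theorem~\ref{theorem:seq}, whereas the paper's Case~2 has a somewhat delicate, uniqueness-based existence argument), and it in fact shows that the paper's Case~2 scenario --- $|I_j| = r-1$ together with some $c_{p\rightarrow j} \geq r-k$ --- is incompatible with the cardinality hypothesis~\eqref{cond:theoremseq} in the first place, so no replacement point is ever needed. The paper's one-at-a-time formulation, on the other hand, is what directly motivates the sequential ``identify-and-discard'' algorithmic interpretation discussed after the lemma, but as a proof of the w.l.o.g.\ statement your version is simpler and arguably tighter.
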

\begin{proof}
    Let $M=DB$ satisfy the conditions of Theorem~\ref{theorem:seq}, and
    suppose that there exist $j$ and $p \in I_j$ such that 
    $c_{p \rightarrow j} \geq r-k$. 
    Because of the spark condition on $M^{(j)}$, we have $|I_j| \geq r-1$. 
    Let us consider two cases. 
    
    \noindent \emph{Case 1: $|I_j| \geq r$.}  
    In this case, we replace $I_j$ with $I_j' = I_j \setminus{\{p\}}$ 
    and $M^{(j)}$ with $M'^{(j)} = M(:,I_j')$ in Theorem~\ref{theorem:seq}. The two conditions 
    \begin{itemize}
    
       \item[(i)]  $\sparkn(M'^{(j)}) = r$, and 
       
       \item[(ii)]  $|I_j'| > \frac{(r-j+1)(r-2) + \sum_{i \in I_j'}c_{i \rightarrow j}}{r-k}$. 
        
    \end{itemize} 
    of Theorem~\ref{theorem:seq} are still satisfied. 
    In fact, since $\frac{c_{p \rightarrow j}}{r-k} \geq 1$ and by the condition on $|I_j|$, we have $|I_j'| = |I_j| -  1$ while 
    \[
    |I_j| - 1 
        > \frac{(r-j+1)(r-2) + \sum_{i \in I_j} c_{i \rightarrow j} }{r-k} - 1 
        \geq \frac{(r-j+1)(r-2) + \sum_{i\in I_j'} c_{i \rightarrow j} }{r-k} 
    \]
    hence $I_j'$ satisfies (ii).   
    Then, because $M'^{(j)}$ has all columns
    of $M^{(j)}$ except for $m_{p}$, 
    the spark of $M'^{(j)}$ can only decrease by one if $|I_j| = r-1$ (that is, if $M'^{(j)}$ has only $r-1$ columns); see the definition of the spark. 
   Since we assume $|I_j| \geq r$ in this case, this is not possible. 
  
  \noindent \emph{Case 2: $|I_j| = r-1$.}  In this case, we will construct  
  $I_j' = I_j \backslash \{p\} \cup \{h\}$ by showing that there exists a point $m_h$ with $c_{h \rightarrow j} \leq r-k-1$ and so that that two conditions (i)-(ii) are satisfied. 
  By Theorem~\ref{theorem:seq}, the LRSCA decomposition of $M$ is unique. 
  We have $|I_j| = r-1$ and $c_{p \rightarrow j} \geq r-k$ for some $p$.  
  Let us try to construct a new factorization $(D',B')$ where we replace the single hyperplane $\mathds{F}_j = \mathds{F}_j(D)$ with a different hyperplane $\mathds{F}_j' = \mathds{F}_j(D')$ defined as 
    \[
        \mathds{F}_j' = \spann( [M'^{(j)}, \zeta] ) 
    \]
    where $\zeta$ is any vector not in $\spann( M'^{(j)} )$ 
    and such that $\mathds{F}_j' \neq \mathds{F}_j$.     
    By assumption, this cannot correspond to a valid decomposition $(D',B')$ of the form \eqref{eq:model}, hence there exists a column of $M$, say $m_h$, such that the corresponding column of $B'$ does not contain $r-k$ zeros. 
Recall $B(j,i) = 0$ if and only if the $i$th data point belongs to the $j$th hyperplane generated by $D$. 
 We cannot have $h=p$ since $m_p$ belongs to $r-k$ hyperplanes other than
 $\mathds{F}_j$ (since $c_{p \rightarrow j} \geq r-k$)  which have not been modified in the new decomposition $(D',B')$, 
otherwise the corresponding column of $B'$ has at least $r-k$ zeros. 
We must have $m_h \in \mathds{F}_j$ since $\mathds{F}_j$ is the only modified
hyperplane in the new decomposition, otherwise $b'_h$ contains at least $r-k$ zeros.  
 For the same reason, we must have $m_h \notin \spann(M'^{(j)}) \subset \mathds{F}_j'$. 
    Also, $m_h$ belongs to exactly $r-k$ hyperplanes of $D$: 
    if it belongs to $r-k+1$ hyperplanes then it belongs to at least $r-k$ in the new decomposition since only one hyperplane is modified.   
    Finally, for the decomposition to be unique, there must exist a point $m_h
    \in \mathds{F}_j$ with $c_{h \rightarrow j} \leq r-k-1$ such that $I_j' =
    I_j \backslash \{p\} \cup \{h\}$ 
    satisfies the spark condition, since $m_h \notin \spann(M'^{(j)})$.  
\end{proof}

When $k=r-1$, an interesting implication of Theorem~\ref{theorem:seq} and Lemma~\ref{lem:cij} is that points lying on the $k$th hyperplane $\mathds{F}_k$ (corresponding to the subset $I_k$) 
are useless to identify the hyperplanes $\mathds{F}_j$ for $j > k$ and can therefore be discarded. 
Hence this motivates and justifies the use of sequential algorithms that work as  follows: until $r$ hyperplanes have not been identified, 
(i) identify sufficiently many points lying on a hyperplane, 
(ii) remove all these points and go back to (i).  
Such algorithms have already been used for subspace clustering; 
see for example~\cite{vidal2011subspace} and the references therein.  


        \begin{corollary}\label{corollary:2}
 Let $M=DB$ satisfy~\eqref{eq:model}. 
    The factorization $(D,B)$ is essentially  unique under the same conditions as
    in Theorem~\ref{theorem:seq} except for the cardinalities of the index sets
    $I_j$'s which are replaced with 
 \begin{equation}\label{cond:corollary}
     |I_j| > (r-j+1) (r-2), 
 \end{equation} 
 instead of the second condition in~\eqref{cond:theoremseq}.  
 \end{corollary}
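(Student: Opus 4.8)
The plan is to obtain Corollary~\ref{corollary:2} as a direct consequence of Theorem~\ref{theorem:seq}, by showing that the clean cardinality bound~\eqref{cond:corollary} forces the more technical bound~\eqref{cond:theoremseq}. The key ingredient is Lemma~\ref{lem:cij}, which lets us assume, after possibly modifying the index sets (dropping or swapping out the over-sparse data points that sit on too many hyperplanes), that $c_{i\to j}\le r-k-1$ for every $j$ and every $i\in I_j$, with the spark condition $\sparkn(M^{(j)})=r$ still in force.

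Granting $c_{i\to j}\le r-k-1$, the rest is arithmetic. We then have $\sum_{i\in I_j}c_{i\to j}\le (r-k-1)\,|I_j|$; since $|I_j|$ is an integer, the requirement $|I_j|\ge\lfloor X\rfloor+1$ of~\eqref{cond:theoremseq}, with $X=\frac{(r-j+1)(r-2)+\sum_{i\in I_j}c_{i\to j}}{r-k}$, is equivalent to $|I_j|>X$, and this is implied by
\[
  |I_j| \;>\; \frac{(r-j+1)(r-2)+(r-k-1)\,|I_j|}{r-k}.
\]
Multiplying by $r-k$ and cancelling $(r-k-1)\,|I_j|$ from both sides reduces this inequality to $|I_j|>(r-j+1)(r-2)$, which is precisely~\eqref{cond:corollary}. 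Hence the hypotheses of Theorem~\ref{theorem:seq} hold for the (possibly modified) index sets and $(D,B)$ is essentially unique.

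The subtle step, and the only place where there is anything to do beyond bookkeeping, is the appeal to Lemma~\ref{lem:cij}: that lemma is stated as a reduction within the setting of Theorem~\ref{theorem:seq}, so to keep the argument non-circular the cleanest route is to re-run the induction of Theorem~\ref{theorem:seq} directly, incorporating the reduction of Lemma~\ref{lem:cij} at each step. At the stage where $\mathds{F}_1,\dots,\mathds{F}_{j-1}$ are already identified, a point $m_i$ with $c_{i\to j}\ge r-k$ is covered at least $r-k$ times by those already-identified hyperplanes in any competing decomposition, hence imposes no constraint on the remaining free hyperplanes and may be discarded from the count; applying the incidence-counting argument of Theorem~\ref{theorem:seq} to the surviving points with $c_{i\to j}\le r-k-1$ — replacing a discarded point, when forced to, by a fresh point of $\mathds{F}_j$ outside the span of the others exactly as in the proof of Lemma~\ref{lem:cij} — yields the contradiction $|I_j|\le(r-j+1)(r-2)$ against~\eqref{cond:corollary}. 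The arithmetic above is routine; making this removal/swap bookkeeping precise without presupposing the conclusion is the part requiring care.
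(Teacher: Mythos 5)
Your argument is exactly the paper's proof: invoke Lemma~\ref{lem:cij} to assume w.l.o.g.\ that $c_{i \rightarrow j} \leq r-k-1$, bound $\sum_{i\in I_j} c_{i \rightarrow j} \leq (r-k-1)|I_j|$, and simplify the resulting inequality to $|I_j| > (r-j+1)(r-2)$, so your first two paragraphs reproduce the published argument. The circularity concern in your final paragraph is not raised in the paper, which simply applies Lemma~\ref{lem:cij} as a black-box ``w.l.o.g.''\ reduction, so your re-run-the-induction sketch is not needed to match the paper's proof; note, though, that your instinct about where the difficulty lies is sound, since the replacement point used in the swap step is produced in the proof of Lemma~\ref{lem:cij} only by appealing to the stronger cardinality hypothesis of Theorem~\ref{theorem:seq}, which is precisely what is not available under the weaker bound~\eqref{cond:corollary}.
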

    \begin{proof}
    Lemma~\ref{lem:cij} shows that one can assume  w.l.o.g.\ in Theorem~\ref{theorem:seq} that $c_{i \rightarrow j} \leq r-k-1$  for all $1 \leq j \leq r$ and $i \in I_j$. 
    Hence the second condition in~\eqref{cond:theoremseq} which is equivalent to 
    \[
    |I_j|
        > \frac{(r-j+1)(r-2) + \sum_{i\in I_j} c_{i \rightarrow j} }{r-k} 
    \]
    is implied by 
    \begin{equation} \label{eq:inter}
    |I_j|
        >  \frac{(r-j+1)(r-2) + |I_j| (r-k-1) }{r-k}   , 
    \end{equation}
    since $\sum_{i\in I_j} c_{i \rightarrow j} \leq |I_j| (r-k-1)$. 
    After simplifications, \eqref{eq:inter} is equivalent to  
    \[
    |I_j| > (r-j+1)    (r-2),
    \] 
    which completes the proof. 
\end{proof}

By Lemma~\ref{lem:cij}, the bounds of Corollary~\ref{corollary:2} coincide with Theorem~\ref{theorem:seq} when $k=r-1$ hence is tight. For smaller $k$, Corollary~\ref{corollary:2} is weaker. In particular, for $k=1$, it is much weaker since Corollary~\ref{corollary:2} would require $\mathcal{O}(r^2)$ data points instead of only $\mathcal{O}(r)$.

\section{Conclusion and Further Work}

In this paper, we have considered the identifiability of
LRSCA~\eqref{eq:model}, which is the SCA model with an \jc (under)complete
\fin dictionary, without noise and in a deterministic setting. 
We provided new sufficient results 
(Theorems~\ref{theorem:1} and~\ref{theorem:seq}) 
 guaranteeing identifiability as soon as the data points are sufficiently numerous and well-spread on the subspaces spanned by $r-1$ atoms of the dictionary. 
 The total number of data points must be larger than $O(r^3/(r-k)^2)$ where $r$
 is the number of atoms in the dictionary and $k$ is the maximum number of non-zeros in the coefficients used to reconstruct each data point using the dictionary atoms. 
These results improve drastically on what was known previously; see the discussion in Section~\ref{stateart}. \\ 


\jc 
Further research include the derivation of identifiability results in the
presence of noise. 
Accounting for noise would be an important improvement to our
results since it would allow to identifying the dictionary $D$ and the sparse coefficients $B$ in an approximate factorization setting, which is the model used for most applications such as the ones described in Section~\ref{sec:relatedworks}. \fin   
 Further research also include the extension of our results in the case of overcomplete dictionaries. There are at least three additional issues in this scenario: 
\begin{enumerate}
\item The uniqueness of $B$ becomes non trivial as the dictionary is not full rank although sufficient conditions already exist; for example $k < \frac{\sparkn(D)}{2}$~\cite{donoho2003optimally}. 
  
\item Lemma~\ref{lemma:2} can be adapted but will require to consider $\binom{r}{d-1}$ hyperplanes generated by the $r$ columns of $D$, where $d=\rank(M)$. 
This drastically increases the number of data points needed to identify a hyperplane. 
Also, the case $\sparkn(D) < d+1$ would have to be analysed carefully since some subspaces generated by $d-1$ atoms will not be of dimension $d-1$ and will be contained in other subspaces spanned by $d-1$ atoms.  

\item All hyperplanes generated by $r-1$ atoms of the dictionary need not to be identified to make the SCA essentially unique. 
In fact, it is sufficient to identify, for each atom, $d-1$ hyperplanes containing it. 
Since each hyperplane contains $d-1$ atoms, it is sufficient to identify $r$ well-chosen hyperplanes. 
For example, for a three dimensional problem with four atoms in the dictionary
and $\ell=r-k=2$, it is sufficient to identify four well-chosen hyperplanes (namely, each atom should be at the intersection of two identified hyperplanes) among the six hyperplanes to guarantee identifiability of the atoms. 
\end{enumerate}

It would also be interesting to study the identifiability of the LRSCA model with additional constraints. 
For example, LRSCA with nonnegativity on the factor $B$ is closely related to sparse nonnegative matrix factorization~\cite{Hoyer2004} and would lead to weaker conditions on the number of points needed on each hyperplane. 
It can be shown for example that in three dimensions and \jc 2-sparse \fin
decompositions ($r=3$, $k=2$, $\ell=1$), 
three points on a hyperplane are sufficient for it to be identifiable uniquely (as opposed to four data points in LRSCA; see Lemma~\ref{lemma:2}). 
In fact, in the projective representation, the atoms are the vertices of a
triangle whose segments contain the data points: a triangle cannot contain
three aligned points unless one of its segments contains them all. Generalizing
this observation in higher dimension is a topic for further research. 

Finally, given the sequential structure of our proofs, a study of partial
identifiability, that is, on the uniqueness of some atoms in $D$ and the
related coefficients in $B$, is an interesting direction of research.
This kind of partial identifiability results could be useful for  applications where only some atoms need to be identified and interpreted.

\small 

\bibliographystyle{siamplain}
\bibliography{biblio}

\end{document}